\documentclass{article}

\usepackage{microtype}
\usepackage{graphicx}

\usepackage{booktabs} 
\usepackage{placeins}
\usepackage{subcaption}

\usepackage{hyperref}
\usepackage{adjustbox}

\usepackage[preprint]{icml2026}
\usepackage{enumitem}

\usepackage{amsmath}
\usepackage{algorithmic}
\usepackage{amssymb}
\usepackage{mathtools}
\usepackage{amsthm}
\usepackage{tikz}
\usetikzlibrary{positioning,fit,arrows.meta}
\usepackage[normalem]{ulem} 
\usepackage[capitalize,noabbrev]{cleveref}

\DeclareMathOperator*{\argmax}{\arg\max}

\theoremstyle{plain}
\newtheorem{theorem}{Theorem}[section]

\theoremstyle{definition}

\theoremstyle{remark}

\newtheorem{case}{Case}

\usepackage[textsize=tiny]{todonotes}

\icmltitlerunning{Differentiable Conformal Training for LLM Reasoning Factuality}

\DeclareFontFamily{U}{stix2bb}{}
\DeclareFontShape{U}{stix2bb}{m}{n} {<-> stix2-mathbb}{}
\NewDocumentCommand{\indicator}{}{\text{\usefont{U}{stix2bb}{m}{n}1}}
\usepackage{setspace}
\begin{document}
\newcommand{\squishlist}{
  \begin{list}{$\bullet$}{
    \setlength{\itemsep}{0pt}
    \setlength{\parsep}{3pt}
    \setlength{\topsep}{3pt}
    \setlength{\partopsep}{0pt}
    \setlength{\leftmargin}{1.5em}
    \setlength{\labelwidth}{1em}
    \setlength{\labelsep}{0.5em}
  }
}

\newcommand{\squishlisttwo}{
  \begin{list}{$\bullet$}{
    \setlength{\itemsep}{0pt}
    \setlength{\parsep}{0pt}
    \setlength{\topsep}{0pt}
    \setlength{\partopsep}{0pt}
    \setlength{\leftmargin}{2em}
    \setlength{\labelwidth}{1.5em}
    \setlength{\labelsep}{0.5em}
  }
}
\newcommand{\squishend}{
  \end{list}
}
\twocolumn[
\icmltitle{Differentiable Conformal Training for LLM Reasoning Factuality}



\begin{icmlauthorlist}
\icmlauthor{Nathan Hittesdorf}{yyy}
\icmlauthor{Marco Salzetta}{xxx}
\icmlauthor{Lu Cheng}{yyy}
\end{icmlauthorlist}

\icmlaffiliation{yyy}{Department of Computer Science, University of Illinois at Chicago, Chicago, United States}
\icmlaffiliation{xxx}{Department of Physics, University of Illinois at Urbana-Champaign, Urbana, United States}

\icmlcorrespondingauthor{Nathan Hittesdorf}{nhitt2@uic.edu}
\icmlcorrespondingauthor{Lu Cheng}{lucheng@uic.edu}

\icmlkeywords{Machine Learning, ICML, Conformal Prediction, Uncertainty Quantification, Reliability, Differentiable Optimization, Conformal Training}

\vskip 0.3in
]



\printAffiliationsAndNotice{} 

\label{sec:abstract}
\begin{abstract}
Large Language Models (LLMs) frequently hallucinate, limiting their reliability in critical applications. Conformal Prediction (CP) addresses this by calibrating error rates on held-out data to provide statistically valid confidence guarantees. Recent work extends CP to LLM factuality to filter out risky claims, ensuring that hallucination rates remain below a user-specified level (e.g., 10\%). While prior methods treat claims independently, Coherent Factuality \citep{rubin-toles2025conformal} extends to multi-step reasoning by representing outputs as dependency graphs and jointly validating claims with their logical ancestors. A key limitation is that Coherent Factuality is not differentiable, requiring hand-crafted scorers that at high reliability levels remove nearly 60\% of true claims. We introduce \textbf{Differentiable Coherent Factuality} (DCF), a fully differentiable relaxation that enables learning improved scorers while provably recovering the original algorithm's guarantees. Experiments on two benchmark reasoning datasets demonstrate DCF achieves up to \textbf{141\%} improvement in claim retention while maintaining reliability guarantees, representing a significant step towards reliable conformal LLM systems.
\end{abstract}

\section{Introduction}\label{sec:introduction}
LLMs are increasingly deployed in high-stakes decision-making, making output factuality critical. However, LLMs are prone to hallucinations---confidently generating false information. Extensive work addresses this, including fine-grained atomic evaluation \citep{min-etal-2023-factscore}, internal truthfulness detection \citep{azaria-mitchell-2023-internal}, and structured reasoning \citep{WeiChainofThought2022, wang2023selfconsistency}.\looseness=-1

Recent approaches adapt Conformal Prediction (CP) \citep{vovk2005algorithmic,AngelopoulosCPIntro2023,zhou2025conformal}---a distribution-free method for constructing prediction sets with guaranteed error rates---to provide statistical factuality guarantees for LLMs \citep{MohriHashimotoConformalFactuality2024, CherianLLMValidity2024}. These methods decompose outputs into atomic subclaims and assign each a \emph{risk score} measuring likelihood of incorrectness. CP calibrates a threshold $\tau$ on held-out data, retaining only subclaims with scores below $\tau$, guaranteeing that retained outputs are factual with probability at least $1-\alpha$ for user-specified error rate $\alpha$. Coherent Factuality (CF) \citep{rubin-toles2025conformal} extends this to multi-step reasoning via Approximate Deducibility Graphs (ADGs; \autoref{fig:example_graph}), where nodes are subclaims and edges encode logical dependencies. This captures contexts where claim validity depends on ancestor correctness, such as in mathematical reasoning.\looseness=-1

A critical limitation of these approaches is the trade-off between reliability and retention: achieving stronger guarantees (smaller $\alpha$) requires more aggressive filtering. This is fundamental to CP---to guarantee fewer errors, the method must become more conservative. The problem is particularly acute with hand-crafted risk scores. While learned scoring functions can improve retention in standard CP settings \citep{stutz2022learning, CherianLLMValidity2024}, CF relies on frequency-based self consistency scoring that cannot be optimized end-to-end. At high reliability levels ($\alpha < 0.1$), these hand-crafted approaches must remove up to 60\% of true claims to maintain coverage guarantees, severely limiting practical utility (\autoref{fig:retention_comparison}).\looseness=-1

To address this limitation, we introduce \textbf{Differentiable Coherent Factuality (DCF)}, a fully differentiable relaxation of CF that enables gradient-based optimization of risk scores. Prior work has made conformal prediction differentiable for classification \citep{stutz2022learning} and independent claim filtering \citep{CherianLLMValidity2024}, but these methods relax independent operations. CF's graph structure introduces tightly coupled discrete operations---threshold filtering, ancestor coherence enforcement, and argmax selection---that require joint relaxations preserving the algorithmic ordering (Section~\ref{sec:overview}). We develop such relaxations and prove they recover the original CF algorithm in the limit (Theorems~\ref{thm:calibration},~\ref{thm:prediction}), enabling training in a differentiable setting while preserving coverage guarantees at test time.\looseness=-1

Our contributions are:

\textbf{(1)} We prove theoretically and demonstrate empirically that CF's discrete operations can be faithfully modeled via differentiable relaxations (Theorems~\ref{thm:calibration},~\ref{thm:prediction}, Section~\ref{sec:validating-surrogate}).

\textbf{(2)} We show that the resulting framework enables end-to-end optimization of claim retention while preserving conformal coverage guarantees, achieving up to \textbf{141\%} improvement on MATH and \textbf{61\%} on FELM compared to frequency-based baselines (Sections~\ref{sec:comparing-baselines}).

\textbf{(3)} We provide in-depth interpretability analysis demonstrating that DCF learns to combine complementary signals more effectively than any individual feature (Section~\ref{sec:understanding-dcf}).

\section{Preliminaries}\label{sec:preliminaries}

\subsection{Conformal Prediction}\label{sec:cp-llms}
Conformal Prediction (CP) \citep{vovk2005algorithmic,AngelopoulosCPIntro2023} provides distribution-free coverage guarantees under exchangeability. Given inputs $x \in \mathcal{X}$ and ground-truth labels $y \in \mathcal{Y}$, a \emph{nonconformity score} $\nu_{\text{cp}}: \mathcal{X} \times \mathcal{Y} \to \mathbb{R}$ measures how atypical a prediction is relative to the true label. In the classic split CP setting \citep{AngelopoulosCPIntro2023}, we calibrate on a held-out set $\{(x_i, y_i)\}_{i=1}^n$ to compute threshold $\hat{\tau}_\alpha = \text{Quantile}_{\lceil(1-\alpha)(n+1)\rceil/n}(\{\nu_{\text{cp}}(x_i, y_i)\}_{i=1}^n)$. At test time, CP retains predictions with $\nu_{\text{cp}}(x,y) \leq \hat{\tau}_\alpha$, guaranteeing $1 - \alpha \leq \mathbb{P}(y_{n+1} \in C(x_{n+1})) \leq 1 - \alpha + \frac{1}{n+1}$.\looseness=-1

\textbf{Conformal factuality} \citep{MohriHashimotoConformalFactuality2024} adapts CP to provide statistical guarantees on LLM outputs (illustrated in \autoref{fig:llm_cf_pipeline}). Given an LLM response, the method: (1) decomposes it into atomic subclaims---self-contained factual statements that can be independently verified, (2) assigns each claim a \emph{risk score} $r_v$ measuring likelihood of incorrectness, and (3) retains only claims with $r_v \leq \hat{\tau}_\alpha$.\looseness=-1

\subsection{Coherent Factuality}\label{sec:coherent-factuality}
Coherent Factuality (CF)\footnote{We use CF to denote both the method and the property; context disambiguates.} \citep{rubin-toles2025conformal} extend CP to multi-step reasoning. While \citet{MohriHashimotoConformalFactuality2024} treats claims independently, CF recognizes that reasoning steps can only be evaluated within the context of preceding claims. For example, in proving $\sqrt{2}$ is irrational, the claim ``$p^2$ is even'' is only valid if the preceding claim ``$p^2 = 2q^2$'' is retained---CF enforces such dependencies.\looseness=-1

\textbf{Approximate Deducibility Graphs.}\label{cf:adg} CF represents outputs as ADGs $G = (V, E)$ where nodes are atomic subclaims and edges encode dependencies (\autoref{fig:example_graph}). A claim is \emph{coherently factual} if it and all ancestors are correct.

\textbf{Subgraph Generation.}\label{cf:subgraph} Given a risk score function $r: V \to \mathbb{R}$, CF generates candidate subgraphs $U_\tau$ for each threshold $\tau \in \mathcal{T}$ by: (1) selecting nodes with $r_v \leq \tau$, (2) removing nodes lacking ancestors, and (3) forming the induced subgraph.

\textbf{Nonconformity Score.}\label{cf:nonconformity} The nonconformity score is $\nu(X, Y, \mathcal{U}_\mathcal{T}) = \sup\{\tau : \text{all } U_{\tau'} \text{ with } \tau' \leq \tau \text{ are CF}\}$. The calibrated threshold $\hat{\tau}_\alpha$ is then obtained via split CP.

\textbf{Prediction.}\label{cf:prediction} At test time, the prediction set is $(U_{\text{filtered}}, \tau_{\text{filtered}}) = \argmax_{(U,\tau) : \tau < \hat{\tau}_\alpha} \tau$.

\textbf{Scoring Function.}\label{cf:scoring} CF uses hand-crafted frequency-based scoring: $s(v) = (1-\beta_{\text{mix}}) f_{\text{ind}}(v) + \beta_{\text{mix}} \cdot \text{med}\{f_{\text{ind}}(v') : v' \in \text{desc}(v)\}$, where $f_{\text{ind}}(v)$ is self-consistency frequency and $\beta_{\text{mix}} \in [0,1]$ is a mixing parameter.

\subsection{Conformal Training}\label{sec:conformal-training}
Conformal Training (ConfTr) \citep{stutz2022learning} makes CP differentiable for classification by relaxing two operations: set construction (sigmoid-smoothed thresholding) and calibration (differentiable sorting for quantile computation). These relaxations are \emph{independent}---each sample's confidence set is constructed in isolation. \citet{CherianLLMValidity2024} extend this to LLM factuality with independent claim filtering. In both cases, at test time the original non-smooth CP is applied, preserving coverage guarantees.\looseness=-1

\section{Method}
\subsection{Overview and Motivation}\label{sec:overview}

CF's hand-crafted frequency-based scoring leads to poor retention at strict $\alpha$ levels---removing up to 60\% of true claims. Conformal training offers a path forward by simulating the calibration and prediction pipeline during training to optimize scoring functions directly for claim retention. However, existing conformal training (Section~\ref{sec:conformal-training}) relaxes two independent operations---per-sample thresholding and quantile computation---where no prediction depends on another's outcome. CF's graph structure introduces three \emph{coupled} discrete operations: \textbf{(1)}~threshold filtering ($\indicator\{r_v \leq \tau\}$), \textbf{(2)}~ancestor coherence (retaining a node only if all ancestors pass), and \textbf{(3)}~argmax selection (a supremum over discrete sets shaped by prior steps). These form a cascade---each step's output depends on the previous step's discrete decisions---requiring joint relaxations that preserve this ordering.\looseness=-1

\textbf{Our approach.} We develop DCF: differentiable relaxations that enable gradient flow while preserving CF's algorithmic structure. Our approach mirrors CF's calibration and prediction procedure, replacing each discrete operation with a differentiable surrogate---sigmoids for indicators, products for conjunctions, and softmax for argmax. We then prove that these jointly recover the original CF algorithm in the limit (Theorems~\ref{thm:calibration},~\ref{thm:prediction}), theoretically grounding the use of DCF for optimizing claim retention.\footnote{Throughout Sections~\ref{sec:soft-membership}--\ref{sec:prediction}, we present formulas in log-space to match the implementation, where $\epsilon > 0$ is added to logarithm arguments for numerical stability.}\looseness=-1

\textbf{Proof challenges.} Proving that our relaxation recovers CF is non-trivial: the coupled temperature parameters controlling each relaxation must approach their limits in a specific order, since reversing the order corresponds to executing algorithmic steps out of sequence (e.g., taking a supremum before applying a sample cutoff). We address this by expressing all limits in terms of a single variable whose rate of approach encodes the correct ordering. An additional challenge arises when the supremum objective combines quantities on incomparable scales ($\tau$ values and violation indicators); we resolve this by restricting $\sup(\lambda\mathcal{T}) - \inf(\lambda\mathcal{T}) \leq 1$ to scale-match these terms. Full proofs appear in Appendix~\ref{sec:proofs}.\looseness=-1å

\subsection{Soft Membership}
\label{sec:soft-membership}

Calibration and prediction begin with the same two operations: threshold filtering and ancestor coherence (cf.\ \hyperref[cf:subgraph]{\textbf{Subgraph Generation}}). We relax these jointly into \emph{soft membership} probabilities $q_{v,\tau} \in (0,1)$ representing the degree to which node $v$ belongs to the subgraph at threshold $\tau$.\looseness=-1

\textbf{Soft Filtering.}
CF selects nodes with $r_v \leq \tau$. For an ADG $\mathcal{G} = (V, E)$ and differentiable scorer $\pi_\theta: \mathbb{R}^d \to \mathbb{R}$, we compute confidence scores $\pi_\theta(\mathbf{x}_v)$ and risk values $r_v = C - \pi_\theta(\mathbf{x}_v)$, where $C$ is a constant.\footnote{$C$ and margin $m$ are hyperparameters that aid comparison with CF.} Following \citet{rubin-toles2025conformal}, we construct threshold grid $\mathcal{T} = \{\tau_{\min}, r_{v_1}, \ldots, r_{v_n}, \tau_{\max}\}$ where $\tau_{\min} = \min_v r_v - m$ and $\tau_{\max} = \max_v r_v + m$.

The hard indicator for whether to retain a claim $v$ $\indicator\{r_v \leq \tau\}$ is non-differentiable. A natural relaxation is the sigmoid---a smooth approximation that outputs values in $(0,1)$ interpretable as retention probabilities $p_{v,\tau}$:
\begin{equation}
    \label{eqn:softkeep}
    p_{v,\tau} = \sigma\left(\frac{\tau - r_v}{T_p}\right), \quad \sigma(x) = \frac{1}{1+e^{-x}}.
\end{equation}
Temperature $T_p$ controls sharpness: as $T_p \to 0^+$, the sigmoid recovers the hard indicator.\looseness=-1

\paragraph{Soft Ancestor Coherence.}
CF removes nodes lacking ancestors---equivalently, a node is retained only if it \emph{and all its ancestors} pass filtering. This conjunction is naturally relaxed as a product: $\Pr(\text{node } v) = \prod_{u \in \text{Anc}(v) \cup \{v\}} \Pr(u)$. We implement this as a weighted geometric mean:\footnote{Geometric mean prevents chains from being punished for length; a standard product would dilute too aggressively.}
\begin{equation}
    \label{eqn:ancestor_coherence}
    \log q_{v,\tau} = \frac{\sum_{u \in \text{Anc}(v) \cup \{v\}} w_u \log p_{u,\tau}}
    {\sum_{u \in \text{Anc}(v) \cup \{v\}} w_u},
\end{equation}
where ancestors receive weight $w_u = \gamma$ and self receives $w_v = 1$. $\gamma > 0$ controls ancestor influence: $\gamma < 1$ attenuates, $\gamma = 1$ weights equally, $\gamma > 1$ amplifies. As a geometric mean of probabilities, $q_{v,\tau}$ approaches zero whenever any ancestor's retention probability approaches zero---preserving the conjunction.\looseness=-1

\subsection{Differentiable Calibration}
\label{sec:calibration}

Calibration computes a threshold $\hat{\tau}_\alpha$ guaranteeing $1-\alpha$ coverage (cf.\ \hyperref[cf:nonconformity]{\textbf{Nonconformity Score}}). For each calibration example, we find the maximum threshold at which the filtered graph contains no false claims---atomic subclaims that are hallucinated or factually incorrect. This defines the nonconformity score $\tilde{\tau}$. Taking a quantile across examples yields $\hat{\tau}_\alpha$.\looseness=-1

With soft membership probabilities $q_{v,\tau}$ in hand, we must relax the discrete supremum $\sup\{\tau : \text{all } U_{\tau'} \text{ with } \tau' \leq \tau \text{ are CF}\}$. This requires two steps: measuring how well each threshold filters false claims, then selecting the largest threshold with acceptable filtering.\looseness=-1

\textbf{Violation Measurement.}
We quantify how well threshold $\tau$ filters false claims via a \emph{violation score} $V_\tau \in [0,1]$, where higher values indicate the threshold is too permissive. Let $V^- = \{v : y_v = 0\}$ be the set of false claims. The validity score aggregates the probability of correctly filtering false claims:
\begin{equation}
    \log Q_{\tau} = \frac{1}{|V^-|} \sum_{v \in V^-} \log(1 - q_{v,\tau}).
\end{equation}
We convert validity to a violation measure:
\begin{equation}
    V_{\tau} = 1 - Q_{\tau}^{1/\tau_s},
\end{equation}
where $\tau_s > 0$ controls sharpness. As $\tau_s \to 0$ and $T_p \to 0$, this approaches hard violation detection.

\paragraph{Soft Supremum.}
The CF nonconformity score is a supremum: the largest $\tau$ such that all subgraphs up to $\tau$ are CF. A supremum is an argmax, which we relax via softmax over a utility balancing threshold magnitude against violations:
\begin{equation}
    s_{\tau} = \lambda \cdot \tau - V_{\tau},
\end{equation}
where $\lambda > 0$ controls the trade-off.\footnote{In practice, we min-max normalize both $\tau$ and $V_\tau$ to $[0,1]$ before computing the utility, so that $\lambda$ trades off quantities on comparable scales. To make the process differentiable, one may use softmin and softmax.} The soft supremum is:
\begin{equation}
    w_{\tau}^{\text{cal}} = \frac{\exp(\beta \cdot s_{\tau})}{\sum_{\tau' \in \mathcal{T}} \exp(\beta \cdot s_{\tau'})}, \quad
    \tilde{\tau} = \sum_{\tau \in \mathcal{T}} w_{\tau}^{\text{cal}} \cdot \tau.
\end{equation}
As $\beta \to \infty$, the softmax concentrates on the maximum-utility threshold, recovering the hard supremum. Intuitively, the soft supremum finds the largest threshold that avoids retaining false claims---exactly the nonconformity score needed for calibration.\looseness=-1

\paragraph{Conformal Quantile.}
The final calibration step computes $\hat{\tau}_\alpha$ as the $\lceil(1-\alpha)(n+1)\rceil/n$-quantile of the nonconformity scores $\{\tilde{\tau}_i\}_{i=1}^n$. In split CP, this is a simple order statistic; to enable gradient flow, we replace it with the differentiable soft quantile operator of \citet{grover2018stochastic}, which computes a smooth approximation $\hat{\tau}_\alpha = \text{SoftQuantile}(\{\tilde{\tau}_i\}, q, \rho)$ controlled by a sharpness parameter $\rho$. As $\rho \to \infty$, the soft quantile recovers the exact order statistic. This is the same mechanism used by ConfTr \citep{stutz2022learning} for differentiable calibration.\looseness=-1

\begin{theorem}[Calibration Convergence]
\label{thm:calibration}
As temperature parameters approach their limits ($T_p \to 0^+$, $\tau_s \to 0$, $\beta \to \infty$), the soft nonconformity score $\tilde{\tau}$ converges to the hard nonconformity score $\nu(\mathcal{U}_\mathcal{T})$ from CF. (Full statement and proof in Appendix~\ref{sec:proof-calibration}.)
\end{theorem}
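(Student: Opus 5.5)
We prove the theorem by passing to the limit one relaxation layer at a time, in the order in which CF executes its steps. To fix that order we tie all temperatures to a single parameter $t \to \infty$: take $T_p = T_p(t)$, $\tau_s = \tau_s(t)$ and $\beta = \beta(t)$, with $T_p \to 0$ fast enough that the soft memberships are already exponentially close to hard indicators at the scale $\tau_s$, and $\beta \to \infty$ slowly enough that the softmax sees the limiting utilities rather than the pre-limit ones. We fix one calibration example with a finite grid $\mathcal{T}$. We also assume the generic condition $r_v \neq \tau$ for every grid point except the self-points $\tau = r_v$. At a self-point, $p_{v,\tau} = \sigma(0) = 1/2$; we handle this either by shifting the grid by an infinitesimal amount or by noting that the margin $m$ and the convention $r_v \le \tau$ give $p \to 1/2$, which we treat below.

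\textbf{Step 1 (membership).} For $\tau \neq r_u$, equation~\eqref{eqn:softkeep} gives $p_{u,\tau} \to \indicator\{r_u \le \tau\}$ exponentially fast in $1/T_p$. With $\epsilon \to 0$ in the logs, equation~\eqref{eqn:ancestor_coherence} gives $\log q_{v,\tau} \to -\infty$ whenever some $u \in \mathrm{Anc}(v) \cup \{v\}$ fails, and $q_{v,\tau} \to 1$ otherwise. So $q_{v,\tau} \to \indicator\{v \in U_\tau\}$, which is exactly CF's subgraph generation (filtering followed by ancestor removal). Self-points are the delicate case: $p_{v,r_v} = 1/2$, so $q$ stays bounded away from $0$. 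We count such $v$ as retained, since the boundary counts as $\le$, and the violation step must then detect a false $v$ at $\tau = r_v$.

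\textbf{Step 2 (violation).} If $U_\tau$ contains no false claim, every $q_{v,\tau} \to 0$ for $v \in V^-$, so $\log Q_\tau \approx -C e^{-c/T_p}$. Dividing by $\tau_s$, we get $Q_\tau^{1/\tau_s} \to 1$ and hence $V_\tau \to 0$, provided $e^{-c/T_p}/\tau_s \to 0$. This is the first ordering constraint. If some false claim is retained, then $\log(1-q)$ is at most $\log(1/2)$ or tends to $-\infty$, so $\log Q_\tau \le -c'/|V^-|$ and $Q_\tau^{1/\tau_s} \to 0$, giving $V_\tau \to 1$. Thus $V_\tau \to \indicator\{U_\tau \text{ not CF}\}$.

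\textbf{Step 3 (soft supremum), the main obstacle.} In the limit the utility is $s_\tau = \lambda\tau - \indicator\{U_\tau \text{ not CF}\}$. Two issues arise here.

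\emph{First issue: the maximizer must be the CF supremum.} Because $U_\tau$ is monotone in $\tau$, the set of CF thresholds is an initial segment of $\mathcal{T}$. On that segment $s_\tau = \lambda\tau$, which is maximized at its right endpoint, and that endpoint is exactly $\nu(\mathcal{U}_\mathcal{T})$. Every non-CF $\tau$ has $s_\tau = \lambda\tau - 1$. Under the hypothesis $\sup(\lambda\mathcal{T}) - \inf(\lambda\mathcal{T}) \le 1$, this is at most $\lambda\nu$. To rule out ties we need the inequality to be strict, so we strengthen the hypothesis to $< 1$. Alternatively, with the min-max normalization, ties occur only on a measure-zero set, which we exclude.

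\emph{Second issue: uniqueness must survive the $\beta$ limit.} Let $\delta > 0$ be the gap between the best and second-best limiting utilities. Step 2 gives uniform convergence of $s_\tau(t)$ over the finite grid. Choose $\beta(t)$ so that $\beta \cdot \max_\tau |s_\tau(t) - s_\tau| \to 0$. Then the softmax weights concentrate on the unique maximizer, with the remaining mass bounded by $|\mathcal{T}| e^{-\beta\delta/2}$. Hence $\tilde\tau \to \nu$.

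The proof would also remark that if we first sent $\beta \to \infty$ with the temperatures fixed, the argmax would be taken over smooth violations and could select the wrong $\tau$. This is the ``out of order'' failure that the coupled schedule is designed to avoid.
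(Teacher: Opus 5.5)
Your proof is correct and follows the same skeleton as the paper's (Appendix~\ref{sec:proof-calibration}): the sigmoid and weighted geometric mean give hard membership, $V_\tau\to\indicator\{U_\tau\text{ not CF}\}$, the scale condition on $\lambda\mathcal{T}$ together with monotonicity of $U_\tau$ places the maximizer of $s_\tau$ at $\nu$, and $(T_p,\tau_s,\beta)$ are coupled to a single parameter. Your steps also give the paper's iterated limit directly.

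Where you depart from the paper, you mostly gain.
\begin{itemize}
\item \textbf{Self-points.} The paper adds a $\sqrt{T_p}$ margin to the sigmoid so that $p_{v,r_v}\to 1$, and its footnote says the theorem requires it. You keep $p_{v,r_v}=1/2$ and observe that a retained false node then has $\liminf q_{v,\tau}\ge 1/2$. That already forces $\log Q_\tau\le -c'$, hence $Q_\tau^{1/\tau_s}\to 0$ as $\tau_s\to 0$. So your argument covers equation~\eqref{eqn:softkeep} exactly as written. You need neither a grid shift nor the distinct-scores assumption, because $q\ge 1/2$ holds however many factors equal $1/2$; the margin $m$ plays no role there either.
\item \textbf{CF thresholds.} Your direct estimate $|\log Q_\tau|\lesssim e^{-c/T_p}$ replaces the paper's L'H\^opital computation. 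It gives the cleaner sufficient condition $e^{-c/T_p}/\tau_s\to 0$, which admits $\tau_s=T_p^{s}$ for every $s>0$, not only $s\in(0,1)$.
\item \textbf{Ties.} Your tie remark is a genuine correction. If $\lambda(\sup\mathcal{T}-\inf\mathcal{T})=1$ and $\nu=\tau_{\min}$, the non-CF threshold $\tau_{\max}$ and $\tau_{\min}$ have the same limiting utility. The softmax then returns $(\tau_{\min}+\tau_{\max})/2\neq\nu$, so the paper's non-strict hypothesis must be made strict.
\end{itemize}

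One ingredient is superfluous: the slow-growth requirement $\beta\max_\tau|s_\tau(t)-s_\tau|\to 0$. The paper subtracts $\beta s_{\max}(T)$ in the exponent, and the same reasoning applies: once the limiting utilities have a gap $\delta>0$, the pre-limit gap eventually exceeds $\delta/2$. So any $\beta\to\infty$ concentrates the softmax weights on $\nu$.
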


\subsection{Differentiable Prediction}
\label{sec:prediction}

At test time, CF selects the maximum threshold below $\hat{\tau}_\alpha$ and returns the corresponding filtered subgraph (cf.\ \hyperref[cf:prediction]{\textbf{Prediction}}): $(U_*, \tau_*) = \argmax_{\tau < \hat{\tau}_\alpha} \tau$. We relax this constrained argmax while respecting the calibrated bound.\looseness=-1

We first compute soft membership probabilities $q_{v,\tau}$ via Equations~\eqref{eqn:softkeep}--\eqref{eqn:ancestor_coherence}. Then we relax the constrained argmax using softmax combined with a sigmoid gate.\looseness=-1

\paragraph{Gated Soft Argmax.}
The gate---another sigmoid relaxation of an indicator---smoothly transitions from 1 (below $\hat{\tau}_\alpha$) to 0 (above), enforcing the calibration bound:
\begin{equation}
    \label{eqn: w_tau unnorm}
    w_\tau^{\text{(unnorm)}} = \exp(\beta \cdot \tau) \cdot \sigma\left(\frac{\hat{\tau}_\alpha - \tau}{\tau_z}\right),
\end{equation}
where $\exp(\beta\cdot\tau)$ weights toward larger thresholds and the sigmoid gates out thresholds exceeding $\hat{\tau}_\alpha$. Temperature $\tau_z$ controls gate sharpness: as $\tau_z \to 0^+$, the gate recovers the hard constraint $\tau < \hat{\tau}_\alpha$. After normalization, final retention probabilities $q_v$ aggregate across thresholds:
\begin{equation} 
\label{eqn: w_tau}
    w_\tau = \frac{w_\tau^{\text{(unnorm)}}}{\sum_{\tau' \in \mathcal{T}} w_{\tau'}^{\text{(unnorm)}}}, \quad
    q_v = \sum_{\tau \in \mathcal{T}} w_\tau \cdot q_{v,\tau}.
\end{equation}

\begin{theorem}[Prediction Convergence]
\label{thm:prediction}
As temperature parameters approach their limits ($T_p \to 0^+$, $\tau_z \to 0^+$, $\beta \to \infty$), the soft retention probabilities $q_v$ converge to the hard CF prediction $U_{\text{filtered}}$. (Full statement and proof in Appendix~\ref{sec:proof-prediction}.)
\end{theorem}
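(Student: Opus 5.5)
The plan is to pass to the limit stage by stage, in the order the hard algorithm runs. To make the order of limits rigorous, I will parametrize all three temperatures by a single variable $t \to \infty$, with $T_p = 1/t$, $\tau_z = 1/t$, and $\beta = \beta(t)$ growing slowly enough that $\beta(t)\cdot\max_\tau |q_{v,\tau}(t) - q^{\text{hard}}_{v,\tau}| \to 0$. Throughout, $\mathcal{T}$ and $\hat{\tau}_\alpha$ are fixed finite objects. I will also assume the generic condition that no grid point coincides exactly with a risk value of a different node in a way that sets a sigmoid argument to $0$, and that $\hat{\tau}_\alpha \notin \mathcal{T}$. Without this, the sigmoid can sit at $1/2$ or the argmax can be non-unique, and the statement needs a tie convention.

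\textbf{Step 1 (soft membership).} For fixed $\tau$, $p_{u,\tau} = \sigma((\tau - r_u)/T_p)$ tends to $\indicator\{r_u \le \tau\}$ exponentially fast in $t$, except at ties $r_u = \tau$. Grid points equal to some $r_v$ give $p_{v,\tau} = 1/2$ for that node itself. Here I would use the convention $\tau \mapsto \tau^+$, or note that in CF's hard rule $r_v \le \tau$ the node is retained, and shift the grid by an infinitesimal margin. Passing through the weighted geometric mean in Eq.~\eqref{eqn:ancestor_coherence} (with $\epsilon$ in the logarithms sent to $0$ first, or taken as $\epsilon = \epsilon(t) \to 0$ fast), $\log q_{v,\tau}$ tends to $0$ if every $u \in \mathrm{Anc}(v) \cup \{v\}$ passes. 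If some $u$ fails, $\log p_{u,\tau} \approx -(r_u - \tau)t$, so $\log q_{v,\tau} \to -\infty$ linearly in $t$. Hence $q_{v,\tau} \to \indicator\{v \in U_\tau\}$, and the error is $O(e^{-ct})$ with $c = \min$ over the positive gaps. Since $\mathcal{T}$ and $V$ are finite, this convergence is uniform.

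\textbf{Step 2 (gated soft argmax).} Write $w_\tau \propto \exp(\beta\tau)\,\sigma((\hat{\tau}_\alpha - \tau)/\tau_z)$. For $\tau < \hat{\tau}_\alpha$ the gate tends to $1$. For $\tau > \hat{\tau}_\alpha$ it is at most $e^{-t(\tau - \hat{\tau}_\alpha)}$. Let $\tau_* = \max\{\tau \in \mathcal{T} : \tau < \hat{\tau}_\alpha\}$. This set is nonempty because $\tau_{\min} \in \mathcal{T}$; otherwise I treat the empty output as a degenerate case. I then bound the ratios $w_\tau / w_{\tau_*}$:
\begin{itemize}
\item For admissible $\tau < \tau_*$ the ratio is $\le e^{-\beta(\tau_* - \tau)}\cdot O(1) \to 0$, since $\beta \to \infty$.
\item For $\tau > \hat{\tau}_\alpha$ the ratio is $\le e^{\beta(\tau - \tau_*) - t(\tau - \hat{\tau}_\alpha)}\cdot 2$.
\end{itemize}

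\textbf{Main obstacle.} The second ratio is the crux. It requires the gate to beat the exponential tilt, i.e.\ $\beta(t)/t \to 0$, or at least $\beta(t)(\tau - \tau_*) < t(\tau - \hat{\tau}_\alpha)$ eventually. This is exactly where the order of limits matters: $\tau_z \to 0$ must act before $\beta \to \infty$. If $\beta$ grew faster, the softmax would select the largest grid point regardless of the gate, which is the ``supremum before cutoff'' failure. I would handle it by choosing $\beta(t) = \sqrt{t}$, or any $\beta = o(t)$ with $\beta \to \infty$. This rate also satisfies the condition from Step 1, because the Step 1 errors are exponentially small.

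\textbf{Step 3 (combine).} We have $q_v = \sum_\tau w_\tau q_{v,\tau}$ with $w \to \delta_{\tau_*}$ and $q_{v,\tau} \to \indicator\{v \in U_\tau\}$, all on a finite index set. Therefore $q_v \to \indicator\{v \in U_{\tau_*}\}$. This is the indicator of the hard CF prediction $U_{\text{filtered}} = U_{\tau_*}$, which proves the theorem in the joint limit along the prescribed path. Any path with $T_p, \tau_z \to 0$, $\beta \to \infty$ and $\beta\tau_z \to 0$ works equally well.
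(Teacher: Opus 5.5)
Your skeleton matches the paper's. You take the pointwise limit of the soft keep and of the weighted geometric mean, then show the gated softmax concentrates on $\tau_*=\max\{\tau\in\mathcal{T}:\tau<\hat{\tau}_\alpha\}$, provided the gate sharpens faster than $\beta$ grows. Your condition $\beta\tau_z\to0$ is exactly what the paper's path $\beta=T_p^{-a}$, $\tau_z=T_p^{ab}$ encodes, and its nested order $\lim_{T_p}\lim_{\beta}\lim_{\tau_z}$ likewise lets the gate act first. You then finish because $q_v$ is a finite convex combination.

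\textbf{The gap is in Step~1.} Ties $r_u=\tau$ are not a non-generic nuisance here. The grid $\mathcal{T}$ is built from the risk scores, so whenever $\tau_*$ is neither $\tau_{\min}$ nor $\tau_{\max}$ it equals $r_{v_0}$ for some node $v_0$, and then $p_{v_0,\tau_*}=\sigma(0)=1/2$ for every $T_p$. If $v_0\in U_{\tau_*}$, then $q_{v_0,\tau_*}\to 2^{-1/(1+\gamma|\mathrm{Anc}(v_0)|)}\in[1/2,1)$. Every descendant of $v_0$ lying in $U_{\tau_*}$ likewise has a limit strictly between $1/2$ and $1$. So your conclusion $q_v\to\indicator\{v\in U_{\tau_*}\}$ is false for the DCF of Eq.~\eqref{eqn:softkeep}, and it fails precisely at the threshold that gets selected.

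Your patches do not repair this. The convention $\tau\mapsto\tau^+$ redefines the object, which is not a legitimate move in a limit statement about a fixed function. An explicit margin such as $\sigma((\tau-r_v+\sqrt{T_p})/T_p)$, which the paper uses for Theorem~\ref{thm:calibration}, would make your argument correct, but for a different soft keep.

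The missing idea is to weaken the conclusion instead of changing the construction. Each $p_{u,\tau}$ tends to $1$, $1/2$, or $0$ according as $r_u<\tau$, $r_u=\tau$, or $r_u>\tau$. Hence a weighted geometric mean with positive weights tends to $0$ if some $u\in\mathrm{Anc}(v)\cup\{v\}$ has $r_u>\tau$, and to a value in $[1/2,1]$ otherwise. The paper accordingly states the theorem as $\{v:\lim q_v\in[1/2,1]\}=U_{\text{filtered}}$. With that reading your Steps~2--3 go through verbatim, and your genericity assumption about ties between distinct nodes is no longer needed.

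Three smaller points.
\begin{enumerate}
\item The coupling $\beta(t)\max_\tau|q_{v,\tau}-q^{\text{hard}}_{v,\tau}|\to0$ that you impose is unnecessary. The weights $w_\tau$ do not depend on $T_p$, and Step~3 only uses that a finite convex combination of convergent quantities converges. Without a margin the condition is also unsatisfiable: for a root node $v$, the error at $\tau=r_v$ stays equal to $1/2$.
\item Your assumption $\hat{\tau}_\alpha\notin\mathcal{T}$ is a legitimate alternative to the paper's device of shifting the gate to $\sigma((\hat{\tau}_\alpha-\tau-\sqrt{\tau_z})/\tau_z)$. The paper's version enforces the strict inequality $\tau<\hat{\tau}_\alpha$ even when $\hat{\tau}_\alpha\in\mathcal{T}$, at the cost of the stronger rate $\beta\sqrt{\tau_z}\to0$ (hence its $b>2$). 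Yours needs only $\beta\tau_z\to0$. Once ties are handled as above, your closing remark that any path with $\beta\tau_z\to0$ works is a mild generalization of the paper's power-law path.
\item $\tau_{\min}\in\mathcal{T}$ does not make $\{\tau\in\mathcal{T}:\tau<\hat{\tau}_\alpha\}$ nonempty. $\hat{\tau}_\alpha$ is calibrated on other graphs and may lie below this graph's $\tau_{\min}$. The paper simply assumes nonemptiness; your degenerate-case fallback is fine, but the reason you give for nonemptiness is not.
\end{enumerate}
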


\subsection{Training Objective}
\label{sec:training}

Following ConfTr \citep{stutz2022learning}, we simulate the full conformal pipeline during training. At each epoch, we split the training data into disjoint calibration and prediction subsets $\mathcal{D}_{\text{cal}}$ and $\mathcal{D}_{\text{pred}}$. Differentiable calibration (Section~\ref{sec:calibration}) is applied to $\mathcal{D}_{\text{cal}}$ to obtain $\hat{\tau}_\alpha$, and differentiable prediction (Section~\ref{sec:prediction}) generates soft retention probabilities $q_{i,v}$ on $\mathcal{D}_{\text{pred}}$. Because all operations are differentiable, gradients flow from the loss through quantile estimation and calibration back to the scorer parameters $\theta$.\looseness=-1

Following \citet{CherianLLMValidity2024}, we maximize the number of true claims retained:
\begin{equation}
    \mathcal{L}_{\text{retention}} = -\frac{1}{|\mathcal{D}_{\text{pred}}|} \sum_{i \in \mathcal{D}_{\text{pred}}} \sum_{v \in V_i} y_{i,v} \cdot q_{i,v},
\end{equation}
where $y_{i,v} \in \{0,1\}$ are ground-truth labels and $q_{i,v}$ are the coherence-adjusted soft retention probabilities from Equation~\eqref{eqn: w_tau}. While the loss function is shared with \citet{CherianLLMValidity2024}, there it optimizes independent claim filtering; here, gradients flow through the full graph-structured pipeline---calibration, ancestor coherence, and gated prediction---enabling the scorer to learn patterns that exploit CF's dependency structure. This directly targets the limitation identified in Section~\ref{sec:introduction}: learning scores that retain more correct claims under conformal guarantees.\looseness=-1

\subsection{Summary and Training Procedure}
\label{sec:summary}

Table~\ref{tab:cf_dcf_comparison} summarizes all transformations from CF to DCF. Theorems~\ref{thm:calibration} and~\ref{thm:prediction} establish that these relaxations jointly recover the original CF algorithm in the limit, ensuring that optimization of the differentiable surrogate corresponds to optimization of the discrete procedure. With convergence established, we formalize DCF into two differentiable subroutines---calibration (Algorithm~\ref{alg:calibration}) and prediction (Algorithm~\ref{alg:prediction})---which compose into the end-to-end training procedure (Algorithm~\ref{alg:training}). At test time, the learned scorer deploys in the original discrete CF algorithm, preserving coverage guarantees. Full algorithm details are in Appendix~\ref{sec:algorithms}.\looseness=-1

\begin{table*}[t]
\centering
\caption{Summary of transformations from CF (discrete) to DCF (differentiable).}
\label{tab:cf_dcf_comparison}
\footnotesize
\setlength{\tabcolsep}{3pt}
\renewcommand{\arraystretch}{1.05}
\begin{tabular}{l|c|c|c}
\toprule
\textbf{Operation} & \textbf{CF (Discrete)} & \textbf{DCF (Differentiable)} & \textbf{Limit} \\
\midrule
Threshold filtering & $\indicator\{r_v \leq \tau\}$ & $\sigma\!\left(\frac{\tau - r_v}{T_p}\right)$ & $T_p \to 0^+$ \\
Ancestor coherence & $v \in V_\text{true} \!\Rightarrow\! \text{Anc}(v) \subseteq V_\text{true}$ & $q_{v,\tau} = \text{GeomMean}(\{p_{u,\tau}\}_{u \in \text{Anc}(v) \cup \{v\}})$ & (conjunction) \\
\midrule
Violation detection & $U_\tau$ is CF or not & $V_\tau = 1 - Q_\tau^{1/\tau_s}$ & $\tau_s \to 0$ \\
Nonconf.\ score & $\sup\{\tau : U_\tau \text{ is CF}\}$ & $\sum_\tau w_\tau \cdot \tau$ & $\beta \to \infty$ \\
Quantile & $\lceil(1{-}\alpha)(n{+}1)\rceil$-th order stat. & Soft quantile with strength $\rho$ & $\rho \to \infty$ \\
\midrule
Prediction & $\arg\max_{\tau < \hat{\tau}_\alpha} \tau$ & $\sum_\tau w_\tau \cdot q_{v,\tau}$, \; $w_\tau^{\text{(unnorm)}} \propto \exp(\beta\tau) \cdot \sigma\!\left(\frac{\hat{\tau}_\alpha - \tau}{\tau_z}\right)$ & $\beta \to \infty$, $\tau_z \to 0$ \\
\bottomrule
\end{tabular}
\end{table*}

\section{Experiments}\label{sec:experiments}
As established in Section~\ref{sec:introduction}, hand-crafted scoring functions remove up to 60\% of true claims at high reliability levels. Our experiments evaluate whether DCF's learned scoring can improve this retention-guarantee trade-off. We focus on $\alpha \in [0.01, 0.10]$---corresponding to 90--99\% CF guarantees---where the trade-off is most acute and reliability matters most. Our experiments address three questions:
\begin{itemize}[leftmargin=*,nosep]
    \item \textbf{RQ1 (Validity)}: Does DCF faithfully approximate CF to enable valid gradient-based optimization?
    \item \textbf{RQ2 (Performance)}: Does DCF retain more true claims than the SOTA methods while maintaining coverage guarantees?
    \item \textbf{RQ3 (Understanding)}: Does learning to \emph{combine} features provide value beyond any single feature?
\end{itemize}

\subsection{Experimental Setup}\label{sec:experimental-setup}

\subsubsection{Scorer Architecture}
\label{sec:scorer}
We implement $\pi_\theta$ as logistic regression over claim features to avoid overfitting and enable interpretability analysis.\looseness=-1

\subsubsection{Datasets}
\label{sec:data_setup}
We follow \cite{rubin-toles2025conformal} to use the following two benchmark datasets:
(1) \textbf{MATH} \citep{hendrycks2021measuring}: 202 competition-level mathematics problems. We use GPT-5-mini \citep{openai_gpt5mini_2025} to generate solutions decomposed into atomic subclaims forming ADGs (\autoref{fig:example_graph}). Two annotators with strong math background performed dual-annotation to verify dependencies and label correctness. Graphs average 7.3 claims and 7.3 edges per problem, with 81.2\% fully correct.\looseness=-1
(2) \textbf{FELM} \citep{chen2023felm}: 710 problems across four domains: reasoning (207), math (194), world knowledge (184), and science (125), with subclaim-level decompositions and correctness annotations provided by the dataset authors. We exclude writing, which lacks structured logical dependencies. FELM exhibits simpler reasoning chains (4.0 claims, 2.8 edges per problem) with 62.4\% fully correct.\looseness=-1

\textbf{Features.} To provide training signals beyond frequency for the training of our scorer model, we extract 30 claim-level features organized into four categories (\citet{CherianLLMValidity2024} similarly extract claim features for learned scoring): (1)~base scoring (self-consistency frequency), (2)~semantic coherence (logical consistency with ancestors, inference gap detection), (3)~domain indicators (12 binary flags for mathematical topics), and (4)~graph metrics (PageRank, betweenness, reachability computed via NetworkX). Semantic features use the same GPT-5-mini model as frequency scoring, ruling out gains from stronger model access. For FELM, we exclude MATH-specific domain indicators and use two feature configurations: 7 features for $\alpha \leq 0.08$ and 20 features for $\alpha \geq 0.09$ (selected by correlation with ground-truth labels). See Appendix~\ref{sec:feature_details} for complete feature descriptions.\looseness=-1

\subsubsection{Evaluation Protocol}

We use 20-fold cross-validation to reduce variance in our estimates. Each fold partitions data into three subsets with distinct roles: (1) \textbf{Training set} (70\%): Used to learn the scorer $\pi_\theta$, further split into calibration and prediction subsets for differentiable training (\autoref{fig:training_flow} in Appendix \ref{sec:training_flow})
(2) \textbf{Validation set} (15\%): For early stopping during training
(3) \textbf{Test set} (15\%): For final evaluation.
At test time, the learned scorer deploys in the original CF algorithm, preserving theoretical guarantees.\looseness=-1

\subsubsection{Baselines}
We compare DCF against state-of-the-art conformal LLM factuality methods, each ablating a key component of DCF:
\begin{itemize}[leftmargin=*,nosep]
    \item \textbf{Coherent Factuality (CF)} \citep{rubin-toles2025conformal}: Hand-crafted frequency-based scoring with graph structure. \emph{Does learning improve over hand-crafted scoring?}
    \item \textbf{Independent Factuality} \citep{MohriHashimotoConformalFactuality2024}: Frequency-based scoring without graph structure---claims treated independently. \emph{Does modeling dependencies help?}
    \item \textbf{Boosted Independent} \citep{CherianLLMValidity2024}: Learned scoring with the same features as DCF, but without graph structure. \emph{Does learning alone suffice, or is graph structure necessary?}
    \item \textbf{XGBoost + Conformal}: XGBoost trained independently for optimizing CF accuracy on the same features, then plugged into CF as the scorer. \emph{Can a standard classifier replace DCF's differentiable training?}
\end{itemize}

\subsubsection{Metrics}

Theorems~\ref{thm:calibration} and~\ref{thm:prediction} establish that DCF recovers CF in the limit. To verify this empirically---confirming that soft operations approximate hard operations well enough to train $\pi_\theta$ (RQ1)---we report \textbf{correlation metrics} (Pearson $r$, MAE) between soft and hard quantities.
For evaluating performance (RQ2, RQ3), we follow standard CP evaluation \citep{AngelopoulosCPIntro2023} and LLM factuality evaluation \cite{rubin-toles2025conformal}:
(1) \textbf{Coverage}: Proportion of test examples where retained claims are CF; target is $1-\alpha$. (2) \textbf{Retention}: Mean claims retained per problem (or equivalently, proportion of claims retained), measuring informativeness of filtered outputs.

\subsection{RQ1: Validating Surrogate Behavior}\label{sec:validating-surrogate}

We empirically verify Theorems~\ref{thm:calibration} and~\ref{thm:prediction} on MATH, focusing on problems with at least one incorrect claim where calibration is non-trivial (error-free problems yield constant thresholds uninformative for validation). We use MATH because (1) we performed manual annotation with direct control over label quality, and (2) its denser graphs (avg.\ 7.3 claims, 7.3 edges vs.\ FELM's 4.0 claims, 2.8 edges) make ancestor constraints more meaningful.\looseness=-1

\subsubsection{Calibration Validation}

Training uses finite temperatures (Appendix~\ref{sec:hyperparam-config}), so we validate empirically that these settings yield behavior sufficiently close to CF. Figure~\ref{fig:calibration_individual} shows strong correlation between soft and hard nonconformity scores ($r=0.921$, MAE$=0.34$). Figure~\ref{fig:calibration_quantile} demonstrates near-perfect threshold agreement across $\alpha \in [0.01, 0.15]$ ($r=0.976$, MAE$=0.12$, 3\% relative error), confirming the soft quantile preserves calibration properties.\footnote{Since 18.8\% of MATH problems contain errors, thresholds saturate at $\tau_{\max}$ for $\alpha \gtrsim 0.19$. We evaluate at $\alpha \leq 0.15$, below saturation.}\looseness=-1

\begin{figure}[!htbp]
\centering
\begin{subfigure}[b]{0.48\columnwidth}
    \centering
    \includegraphics[width=\textwidth]{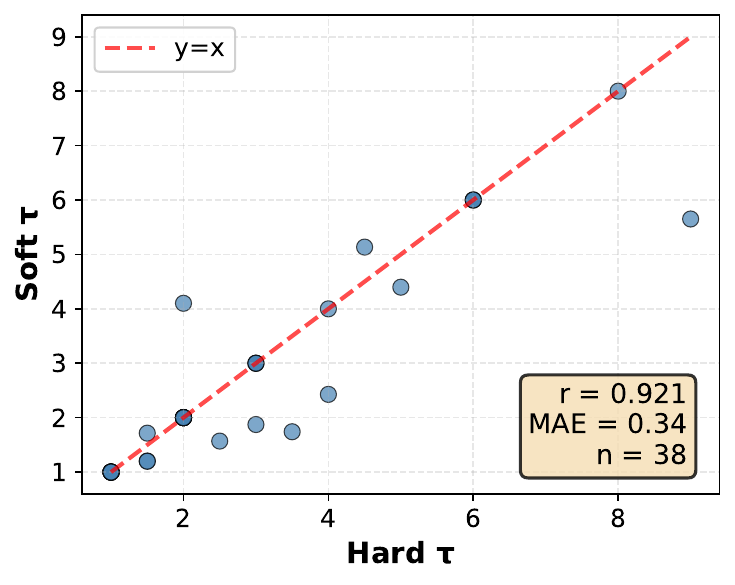}
    \caption{Individual nonconf.\ scores.}
    \label{fig:calibration_individual}
\end{subfigure}
\begin{subfigure}[b]{0.48\columnwidth}
    \centering
    \includegraphics[width=\textwidth]{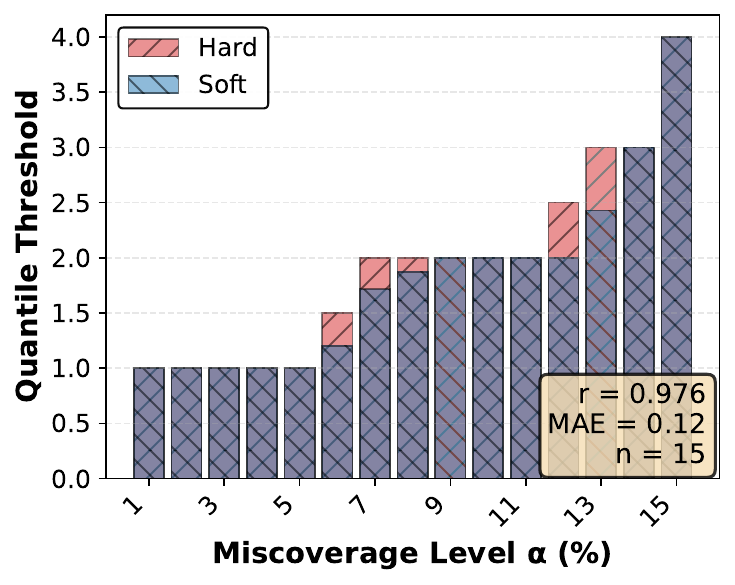}
    \caption{Agg.\ thresholds across $\alpha$.}
    \label{fig:calibration_quantile}
\end{subfigure}
\caption{Soft vs.\ hard calibration validation.}
\label{fig:calibration}
\end{figure}

\subsubsection{Prediction Validation}

Table~\ref{tab:soft_hard_agreement} shows claim-level agreement across 20-fold CV (14,600 predictions). Agreement ranges from 90.2--100\%, with perfect agreement at $\alpha \leq 0.03$ and minor discrepancies at threshold boundary regions. This validates Theorem~\ref{thm:prediction}.\looseness=-1

\begin{table}[!htbp]
\centering
\caption{Soft-hard prediction agreement across $\alpha$ values.}
\label{tab:soft_hard_agreement}
\footnotesize
\setlength{\tabcolsep}{1.7pt}
\renewcommand{\arraystretch}{0.95}
\begin{tabular}{c|rrrr|c}
\toprule
$\alpha$ & Both Incl. & Soft Only & Hard Only & Both Excl. & Agree \\
\midrule
0.01--0.03 & 4791 &   0 &   0 & 9809 & 100.0\% \\
0.04 & 4791 &   0 &  30 & 9779 & 99.8\% \\
0.05 & 4813 &   0 & 902 & 8885 & 93.8\% \\
0.06 & 5428 &   2 & 888 & 8282 & 93.9\% \\
0.07 & 5848 &   4 & 668 & 8080 & 95.4\% \\
0.08 & 6206 &   8 & 1232 & 7154 & 91.5\% \\
0.09 & 6923 &  12 & 1416 & 6249 & 90.2\% \\
0.10 & 7455 &  54 & 994 & 6097 & 92.8\% \\
\bottomrule
\end{tabular}
\end{table}

\paragraph{Summary.} Both calibration ($r > 0.92$) and prediction ($>90\%$ agreement) validations confirm that DCF with practical hyperparameters faithfully approximates CF, ensuring training gradients reflect the true discrete objective.\looseness=-1

\subsection{RQ2: Performance Comparisons}\label{sec:comparing-baselines}

Having validated DCF as a faithful surrogate (RQ1), we evaluate whether the differentiable framework learns improved scoring functions. We compare DCF against the baselines introduced in Section~\ref{sec:experimental-setup}, isolating whether improvements stem from learning, from modeling dependencies, or from their combination.\looseness=-1

\textbf{Main Results.}
Figure~\ref{fig:main_comparison} presents retention and coverage across $\alpha \in [0.01, 0.10]$ on MATH and FELM respectively.\footnote{At $\alpha \in {0.01, 0.02}$ on MATH, 81.2\% of problems are fully correct, leaving few calibration examples with errors. This makes quantile estimation unstable, causing the estimator to reject all claims. We report these values but focus analysis on $\alpha \geq 0.03$.} On MATH, DCF meets the coverage target for $\alpha \in {0.05, 0.06, 0.07, 0.08}$, achieving 60--105\% retention improvements over CF. At $\alpha = 0.03$ (97\% reliability), DCF more than doubles retention (+141\%, 1.76 vs.\ 0.73 claims) with coverage within 0.5pp of target. On FELM, DCF meets the coverage target at 9 of 10 $\alpha$ values and achieves higher retention than CF at 6 of those 9, with gains concentrated at lower $\alpha$ (0.01--0.05) and higher $\alpha$ (0.09--0.10). At mid-range $\alpha$ (0.06--0.08), CF achieves higher retention; however, DCF's coverage at these levels exceeds the target by 0.9--1.6pp (e.g., 93.64\% vs.\ 92\% target at $\alpha=0.08$), suggesting room for trading excess coverage for improved retention. Complete numerical results appear in Tables~\ref{tab:math_results} and~\ref{tab:felm_results} (Appendix~\ref{sec:experimental_details}).\looseness=-1

\paragraph{Baseline Ordering Validates Design.}
DCF consistently outperforms all baselines, including the three state-of-the-art conformal factuality methods. The ordering (DCF $>$ CF $>$ Independent $>$ Boosted Independent $\gg$ XGBoost) confirms that learning improves over hand-crafted scoring (DCF vs.\ CF), that modeling dependencies adds value even without learning (CF vs.\ Independent), and that learning without graph structure actually hurts---Boosted Independent underperforms non-learned Independent while using identical frequency-score. XGBoost trained independently for claim-level accuracy achieves near-zero retention at tight coverage levels, showing that a standard classifier cannot replace DCF's differentiable conformal training. Together, these results demonstrate that DCF's gains require \emph{both} learning and graph structure---neither component alone is sufficient---and that learning must occur \emph{through} the conformal objective.\looseness=-1
\begin{figure*}[htbp]
\centering
\begin{subfigure}[b]{0.24\textwidth}
    \centering
    \includegraphics[width=\textwidth]{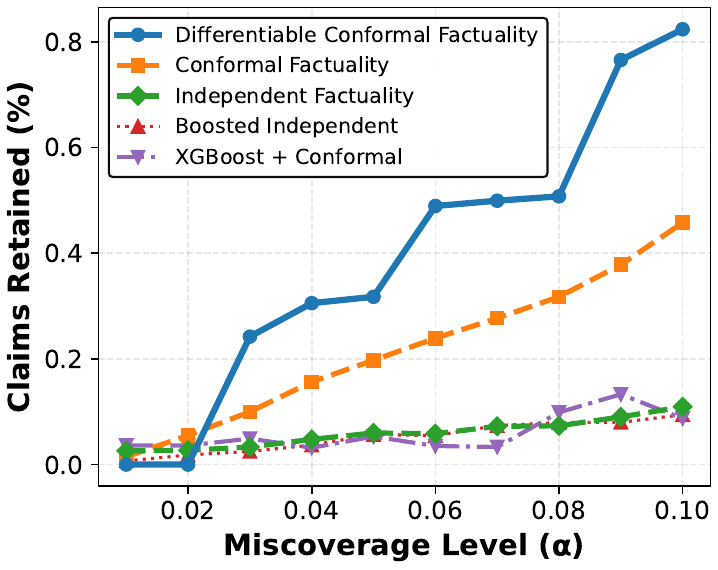}
    \caption{MATH Retention.}
    \label{fig:retention_comparison}
\end{subfigure}%
\hfill
\begin{subfigure}[b]{0.24\textwidth}
    \centering
    \includegraphics[width=\textwidth]{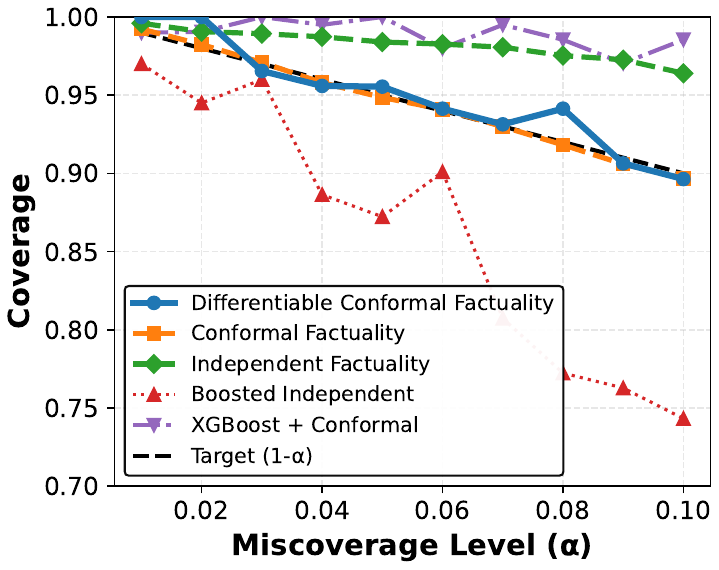}
    \caption{MATH Coverage.}
    \label{fig:coverage_comparison}
\end{subfigure}%
\hfill
\begin{subfigure}[b]{0.24\textwidth}
    \centering
    \includegraphics[width=\textwidth]{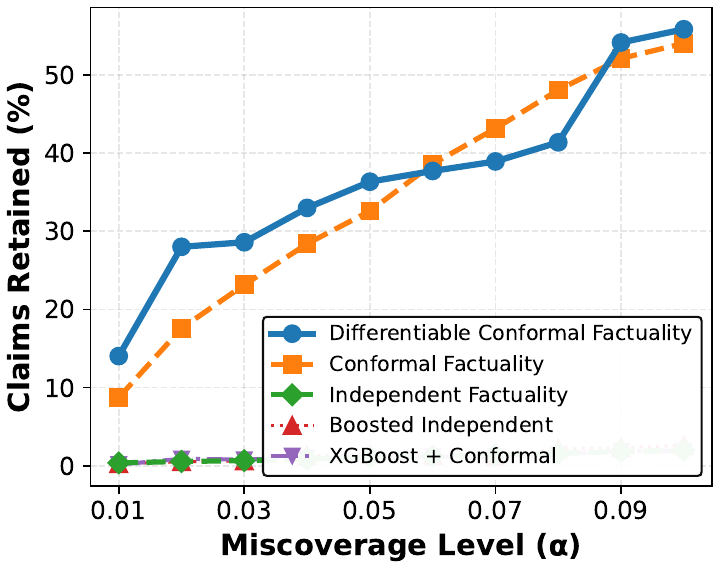}
    \caption{FELM Retention.}
    \label{fig:felm_retention}
\end{subfigure}%
\hfill
\begin{subfigure}[b]{0.24\textwidth}
    \centering
    \includegraphics[width=\textwidth]{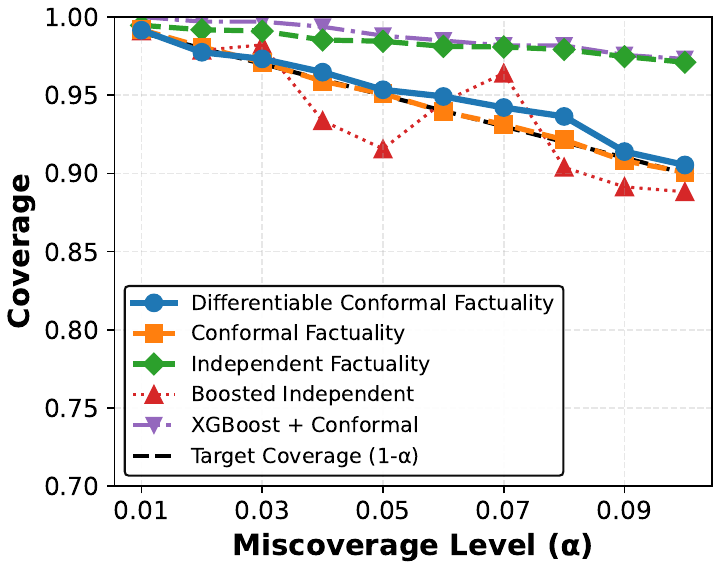}
    \caption{FELM Coverage.}
    \label{fig:felm_coverage}
\end{subfigure}
\caption{DCF vs.\ baselines on MATH (a--b) and FELM (c--d) across $\alpha \in [0.01, 0.10]$.}
\label{fig:main_comparison}
\label{fig:felm_comparison}
\end{figure*}

\paragraph{Summary.} DCF consistently outperforms SOTA across both datasets and reliability levels by directly optimizing CF's conformal objective with learned, graph-aware scoring.\looseness=-1

\subsection{RQ3: Understanding DCF}\label{sec:understanding-dcf}

We now ask whether DCF's gains stem from learning to \emph{combine} features, or simply from accessing more discriminative individual features. CF can use any scalar feature as its risk score---the original method uses frequency, but graph metrics or positional features could substitute. We test whether any single feature, optimally configured, can match DCF.\looseness=-1

\subsubsection{Single-Feature Ablation}

Using SHAP analysis (Appendix~\ref{sec:shap_analysis}), we identify DCF's most influential features: \texttt{nx\_reachability} (graph connectivity), \texttt{claim\_index} (position in reasoning chain), and \texttt{frequency-score}. We construct single-feature CF baselines using each as the risk score, with $\beta_{\text{mix}}$ optimized per-$\alpha$ via grid search.

\begin{figure}[htbp]
\centering
\begin{subfigure}[b]{0.50\columnwidth}
    \centering
    \includegraphics[width=\textwidth]{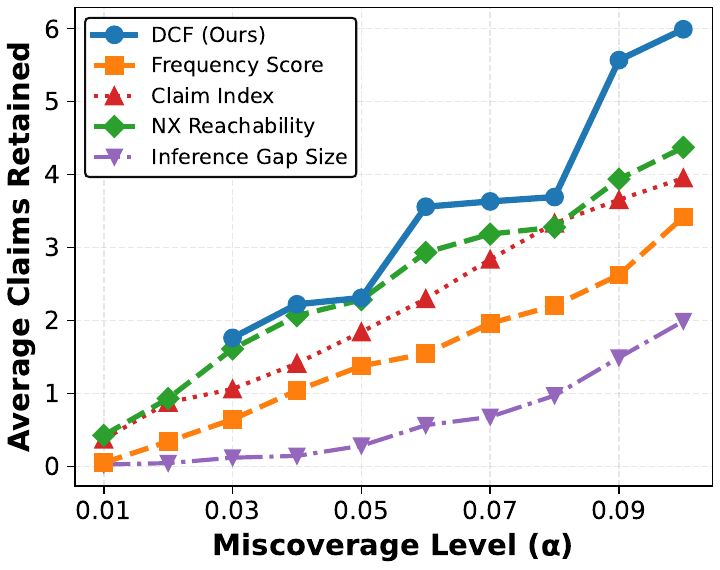}
    \caption{MATH Retention.}
    \label{fig:ablation_retention}
\end{subfigure}%
\hfill
\begin{subfigure}[b]{0.50\columnwidth}
    \centering
    \includegraphics[width=\textwidth]{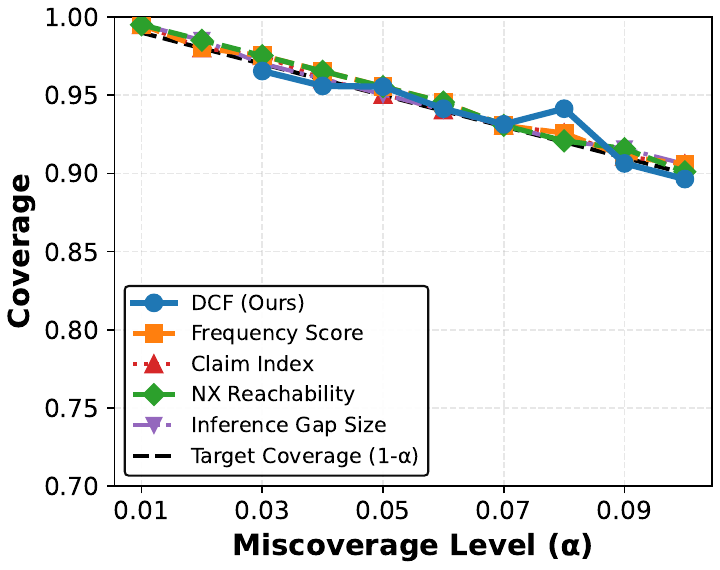}
    \caption{MATH Coverage.}
    \label{fig:ablation_coverage}
\end{subfigure}

\vspace{0.1cm}

\begin{subfigure}[b]{0.50\columnwidth}
    \centering
    \includegraphics[width=\textwidth]{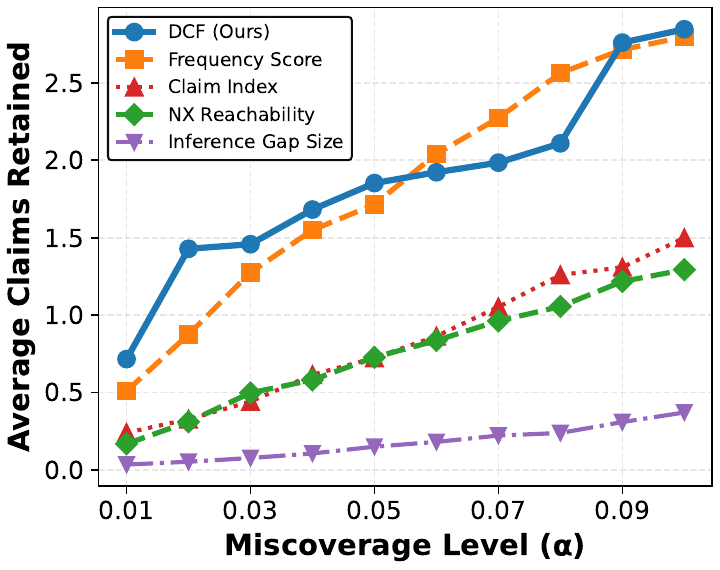}
    \caption{FELM Retention.}
    \label{fig:ablation_retention_felm}
\end{subfigure}%
\hfill
\begin{subfigure}[b]{0.50\columnwidth}
    \centering
    \includegraphics[width=\textwidth]{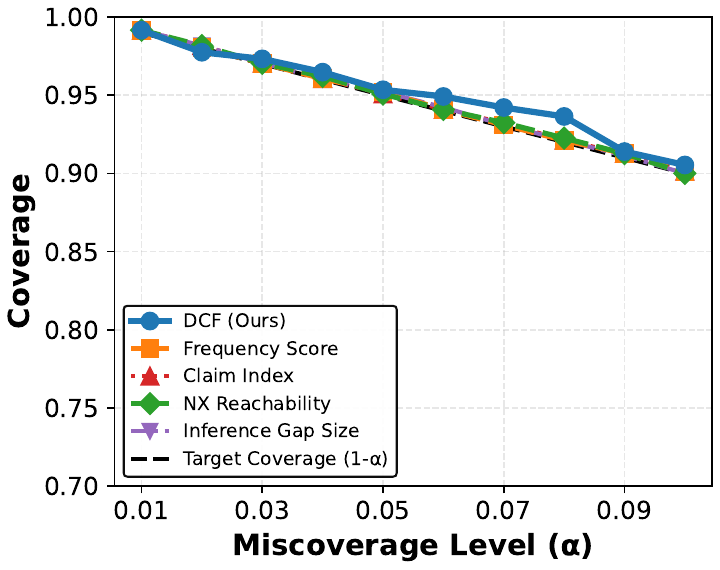}
    \caption{FELM Coverage.}
    \label{fig:ablation_coverage_felm}
\end{subfigure}
\caption{DCF vs.\ single-feature CF baselines on MATH (top) and FELM (bottom). Baselines use one feature with optimized $\beta_{\text{mix}}$.}
\label{fig:ablation}
\end{figure}

\autoref{fig:ablation} shows DCF consistently outperforms single-feature baselines on both datasets. On MATH, DCF outperforms the best single feature (\texttt{nx\_reachability}) by 7--50\% across $\alpha$ values. On FELM, DCF outperforms frequency-score at most $\alpha$ values (by up to 64\%); at mid-range $\alpha$ where frequency wins, DCF maintains excess coverage (Section~\ref{sec:comparing-baselines}). Notably, the best single feature differs by dataset---graph connectivity dominates on MATH's complex reasoning chains, while frequency suffices more on FELM's simpler structures---yet DCF matches or exceeds both.\looseness=-1

\subsubsection{Case Study}
\label{sec:case_studies}

To illustrate how combining features improves retention, consider a MATH problem at $\alpha=0.06$ where all 12 claims are correct. The CF baseline (frequency scoring with optimized $\beta_{\text{mix}}$) retains 0/12 claims; DCF retains 9/12.\looseness=-1

\begin{figure}[htbp]
\centering
\scalebox{0.7}{
\begin{tikzpicture}[
    node distance=0.3cm and 0.5cm,
    claim/.style={circle, draw, minimum size=0.65cm, font=\small, fill=green!20, line width=1.5pt},
    notretained/.style={circle, draw, minimum size=0.65cm, font=\small, fill=gray!30, line width=1.5pt, draw=gray!70},
    highlight/.style={circle, draw, minimum size=0.65cm, font=\small, fill=yellow!40, line width=2.5pt, draw=orange},
    arrow/.style={->, >=stealth, thick}
]

\node[text width=7cm, font=\normalsize, align=justify, draw, fill=blue!5, anchor=north west] (problem) at (-5,0) {
    \textbf{Problem:} The difference of the roots of the quadratic equation $x^2 + bx + c = 0$ is $|b - 2c|$. If $c \neq 0$, find $c$ in terms of $b$.\\[2pt]
    \textbf{Answer:} $c = b - 1$
};

\node[below=0.3cm of problem.south west, anchor=north west, text width=7cm, font=\small, align=left, draw, fill=white] (claims) {
    \textbf{Claims (all TRUE):}\\
    0: Let roots be $r_1, r_2$\\
    1: By Vieta, $r_1 + r_2 = -b$\\
    2: By Vieta, $r_1 r_2 = c$\\
    3: $(r_1 - r_2)^2 = (r_1 + r_2)^2 - 4r_1r_2$\\
    4: Using above: $(r_1-r_2)^2 = b^2-4c$\\
    5: Given $|r_1-r_2| = |b-2c|$\\
    6: Squaring: $b^2-4c = (b-2c)^2$\\
    7: $(b-2c)^2 = b^2-4bc+4c^2$\\
    8: Rearranging: $0=-4bc+4c^2+4c$ \textcolor{orange}{$\leftarrow$ freq=0}\\
    9: $0 = 4c(c+1-b)$\\
    10: From above: $c=0$ or $c=b-1$\\
    11: Since $c \neq 0$, $c=b-1$
};

\node[below=0.3cm of claims.south west, anchor=north west, text width=7cm, font=\normalsize, align=justify, draw, fill=green!5] (summary) {
    \textbf{Prediction set (majority vote):}\\
    \textcolor{green!70!black}{Learned: 9/12 claims retained}\\
    \textcolor{red!70!black}{Baseline: 0/12 claims retained}\\
    \textcolor{orange}{Highlighted: Claim 8 (freq=0)}
};

\node[claim] (c0) at (4.0, 0) {0};
\node[claim, below left=0.25cm and 0.2cm of c0] (c1) {1};
\node[claim, below right=0.25cm and 0.2cm of c0] (c2) {2};
\node[claim, below=0.5cm of c0] (c3) {3};
\node[claim, below=0.3cm of c3] (c4) {4};
\node[claim, below=0.3cm of c4] (c5) {5};
\node[claim, below=0.3cm of c5] (c6) {6};
\node[claim, below left=0.25cm and 0.2cm of c6] (c7) {7};
\node[highlight, below right=0.25cm and 0.2cm of c6] (c8) {8};
\node[notretained, below=0.5cm of c6] (c9) {9};
\node[notretained, below=0.3cm of c9] (c10) {10};
\node[notretained, below=0.3cm of c10] (c11) {11};

\draw[arrow] (c0) -- (c1);
\draw[arrow] (c0) -- (c2);
\draw[arrow] (c1) -- (c4);
\draw[arrow] (c2) -- (c4);
\draw[arrow] (c3) -- (c4);
\draw[arrow] (c4) to[bend right=25] (c6);
\draw[arrow] (c5) to[bend left=12] (c6);
\draw[arrow] (c6) -- (c8);
\draw[arrow] (c7) -- (c8);
\draw[arrow] (c8) -- (c9);
\draw[arrow] (c9) -- (c10);
\draw[arrow] (c10) -- (c11);

\node[below=0.2cm of c11, font=\scriptsize, align=center] (legend) {
    \textcolor{green!70!black}{\textbf{$\bullet$}} All TRUE / Retained\\
    \textcolor{gray}{\textbf{$\bullet$}} Not retained\\
    \textcolor{orange}{\textbf{$\bullet$}} Claim 8
};

\end{tikzpicture}
}
\caption{CF vs. DCF ADG Comparison (Correct Claims)}
\label{fig:case_study_retain}
\end{figure}
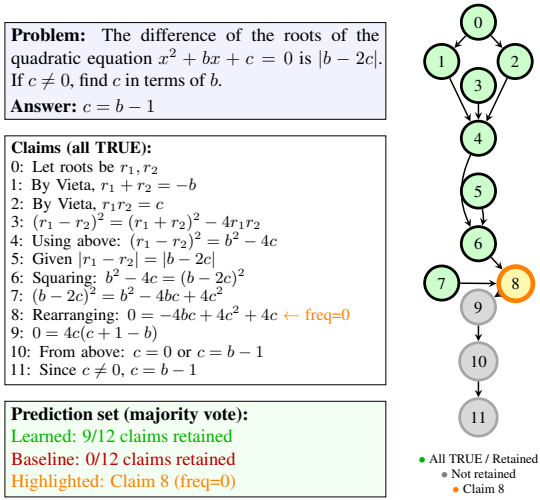

This problem's solution relies on Vieta's formulas---a valid but less common technique that the LLM rarely regenerates. As a result, key steps receive frequency $=0$, causing the baseline to reject the entire chain. DCF succeeds by combining complementary signals: when one feature is unreliable, others compensate.

\begin{table}[htbp]
\centering
\caption{DCF feature contributions for claim~8 (frequency=0).}
\label{tab:case_study_features}
\footnotesize
\setlength{\tabcolsep}{4pt}
\renewcommand{\arraystretch}{0.95}
\begin{tabular}{lrrr}
\toprule
Feature & Value & Weight & Contrib. \\
\midrule
\texttt{nx\_reachability} & 3.00 & 0.272 & +0.815 \\
\texttt{claim\_index} & 8.00 & 0.098 & +0.780 \\
\texttt{nx\_in\_degree} & 2.00 & 0.135 & +0.269 \\
\texttt{quadratic\_equations} & 1.00 & 0.229 & +0.229 \\
\texttt{nx\_out\_degree} & 1.00 & 0.176 & +0.176 \\
\texttt{problem\_relevance} & 1.00 & 0.144 & +0.144 \\
\texttt{coherent\_to\_ancestors} & 1.00 & 0.110 & +0.110 \\
\texttt{nx\_betweenness} & 0.22 & 0.292 & +0.064 \\
\midrule
\textbf{Total} & & & \textbf{2.66} \\
\bottomrule
\end{tabular}
\end{table}

Table~\ref{tab:case_study_features} shows how DCF retains Claim 8 despite frequency $=0$. Graph structure (\texttt{nx\_reachability}, \texttt{claim\_index}) provides strong positive signal, while connectivity and semantic features contribute additional evidence. The learned combination accumulates evidence across signals, enabling retention even when key features like frequency are unreliable. This flexibility is DCF's core advantage: rather than relying on one potentially unreliable signal, it learns which features to trust and how to weight them. See Appendix~\ref{sec:case_study_details} for the full methodology and an additional case study. \looseness=-1
\section{Related Work}\label{sec:related-work}

\textbf{Conformal Prediction.}
Conformal prediction (CP) \citep{vovk2005algorithmic, ShaferCP2008, AngelopoulosCPIntro2023} provides distribution-free coverage guarantees under exchangeability. Extensions include conformalized quantile regression \citep{RomanoConformalizedQuantileRegression2019} and the learn-then-test framework \citep{AngelopoulosLearnThenTest2025}, which calibrates predictive algorithms to achieve risk control---inspiring our treatment of conformal calibration as an optimization problem.\looseness=-1

Most relevant to our work, \citet{stutz2022learning} introduced ConfTr (Conformal Training), demonstrating that discrete quantile operations in CP can be relaxed into differentiable surrogates for end-to-end optimization---a key insight we extend to graph-structured reasoning. \citet{angelopoulos2024conformal} generalized this to conformal risk control with arbitrary monotone loss functions. \citet{GibbsAdaptiveCPDistributionShift2021} proposed adaptive conformal inference for distribution shift, though extending such adaptation to graph-structured dependencies remains open \cite{zhao2024conformalized}.\looseness=-1

\textbf{LLM Factuality.}
Recent work has extended conformal methods to LLMs for statistically rigorous uncertainty quantification \citep{zhou2025conformal}. \citet{kumar2023conformalpredictionlargelanguage} and \citet{quach2024conformallanguagemodeling} were among the first to adapt CP for LLM settings, while \citet{su2024apienoughconformalprediction} introduced sampling-based nonconformity scores for API-only access and \citet{wang2024conuconformaluncertaintylarge} incorporated self-consistency into CP for correctness coverage. These methods operate at the response or question level; a parallel line of work applies CP to structured, claim-level factuality.\looseness=-1

\citet{MohriHashimotoConformalFactuality2024} pioneered conformal factuality, decomposing LLM outputs into atomic claims and filtering based on calibrated risk thresholds. Their method provides high-probability correctness guarantees but treats claims independently without considering logical dependencies. \citet{rubin-toles2025conformal} addressed this with coherent factuality, extending CP to multi-step reasoning through approximate deducibility graphs. While their approach captures reasoning chain structure, it relies on hand-crafted scoring that leads to overly conservative filtering.\looseness=-1

\citet{CherianLLMValidity2024} developed enhanced conformal methods with conditional validity and differentiable scoring functions, demonstrating retention improvements but operating on independent claims without exploiting graph structure. \citet{min-etal-2023-factscore} introduced FActScore for fine-grained atomic evaluation, directly informing our claim-level analysis. Work on uncertainty detection \citep{kadavath2022languagemodelsmostlyknow, azaria-mitchell-2023-internal} suggests learned scoring functions could leverage model-internal features beyond surface-level frequency statistics. Complementary reasoning approaches \citep{WeiChainofThought2022, wang2023selfconsistency} improve reliability but lack formal statistical guarantees.\looseness=-1

Our work synthesizes these directions by making CF differentiable to enable learned scoring functions while preserving conformal guarantees.\looseness=-1

\section{Conclusion}\label{sec:conclusion}

We addressed the challenge of LLM factuality verification: existing conformal methods provide formal guarantees but sacrifice too many true claims to be practically useful. We introduced Differentiable Coherent Factuality (DCF), a framework that enables learning scoring functions for graph-structured conformal prediction while preserving theoretical guarantees.\looseness=-1

Our contributions span theory, methodology, and empirical validation. Theoretically, we proved that DCF recovers the original CF algorithm in the limit, with 93--100\% empirical agreement validating practical convergence. Methodologically, DCF provides the first differentiable relaxation of graph-structured LLM conformal reasoning, enabling end-to-end optimization through calibration and prediction. Empirically, learned scorers achieve up to 141\% retention improvements over frequency-based baselines, while deploying directly in the original CF algorithm at test time.\looseness=-1

More broadly, DCF bridges the gap between formal guarantees and usability. Statistical guarantees provide little value if outputs are too conservative to be informative; by substantially improving retention, DCF makes conformal factuality practical for real-world deployment.\looseness=-1


\section*{Impact Statement}

This work improves factual reliability of LLM outputs through formal statistical guarantees, contributing to safer AI deployment in accuracy-critical domains. We note that conformal guarantees are statistical (holding on average, not per-prediction). Additionally, as noted by \citet{CherianLLMValidity2024}, optimizing for marginal coverage may yield uneven reliability across subgroups or covariates, potentially misrepresenting performance on certain data subsets. Retained claims also still reflect the underlying LLM's knowledge limitations and biases.\looseness=-1

\section*{Limitations}
\label{sec:limitations}
While DCF achieves strong empirical results, we identify several limitations:

\paragraph{Quantile Instability at Strict Thresholds.}
At very low $\alpha$, DCF rejects all claims to maintain coverage: $\alpha \leq 0.02$ on MATH and $\alpha = 0.01$ on FELM. This reflects a fundamental limitation of conformal prediction with small calibration sets---quantile estimates become unstable when few samples fall below strict thresholds. Larger datasets would enable finer-grained calibration.\looseness=-1

\paragraph{Reduced Gains When Frequency Is Discriminative.}
DCF's advantage is largest when self-consistency frequency (or any baseline feature) poorly discriminates valid from invalid claims---as occurs with long reasoning chains, uncommon but valid techniques, or semantically incoherent outputs. On FELM, where simpler reasoning (4.0 vs.\ 7.3 claims per problem) makes frequency more reliable, DCF's gains are smaller and frequency-based baselines occasionally outperform DCF at mid-range $\alpha$ (Table~\ref{tab:felm_results}).\looseness=-1

\paragraph{MATH Dataset Scale and Annotation.}
MATH contains only 202 problems with manually annotated reasoning graphs, limiting robustness of evaluation. Additionally, our operationalization of coherent factuality treats fully hallucinated ADGs as incoherent, which influences graph metrics like \texttt{nx\_reachability}. While DCF outperforms \texttt{nx\_reachability} on MATH as a standalone baseline, the learned combination may not generalize to settings with well-structured but incorrect reasoning, where content-based features may become more critical.\looseness=-1

\paragraph{Scorer Architecture.}
Our linear scorer provides interpretability but limited capacity. Richer architectures---e.g., neural networks with hidden layers---could capture feature interactions that our linear scorer misses, providing enhanced claim retention.

\bibliography{example_paper}

@inproceedings{hendrycks2021measuring,
  author = {Hendrycks, Dan and Burns, Collin and Kadavath, Saurav and Arora, Akul and Basart, Steven and Tang, Eric and Song, Dawn and Steinhardt, Jacob},
  title = {Measuring Mathematical Problem Solving With the {MATH} Dataset},
  booktitle = {Proceedings of the Neural Information Processing Systems Track on Datasets and Benchmarks},
  year = {2021},
  volume = {1},
  editor = {Vanschoren, J. and Yeung, S.},
  url = {https://datasets-benchmarks-proceedings.neurips.cc/paper_files/paper/2021/file/be83ab3ecd0db773eb2dc1b0a17836a1-Paper-round2.pdf}
}

@inproceedings{zhao2024conformalized,
  title={Conformalized link prediction on graph neural networks},
  author={Zhao, Tianyi and Kang, Jian and Cheng, Lu},
  booktitle={Proceedings of the 30th ACM SIGKDD Conference on Knowledge Discovery and Data Mining},
  pages={4490--4499},
  year={2024}
}

@misc{kumar2023conformalpredictionlargelanguage,
      title={Conformal Prediction with Large Language Models for Multi-Choice Question Answering}, 
      author={Bhawesh Kumar and Charlie Lu and Gauri Gupta and Anil Palepu and David Bellamy and Ramesh Raskar and Andrew Beam},
      year={2023},
      eprint={2305.18404},
      archivePrefix={arXiv},
      primaryClass={cs.CL},
      url={https://arxiv.org/abs/2305.18404}, 
}

@misc{quach2024conformallanguagemodeling,
      title={Conformal Language Modeling}, 
      author={Victor Quach and Adam Fisch and Tal Schuster and Adam Yala and Jae Ho Sohn and Tommi S. Jaakkola and Regina Barzilay},
      year={2024},
      eprint={2306.10193},
      archivePrefix={arXiv},
      primaryClass={cs.CL},
      url={https://arxiv.org/abs/2306.10193}, 
}

@misc{su2024apienoughconformalprediction,
      title={API Is Enough: Conformal Prediction for Large Language Models Without Logit-Access}, 
      author={Jiayuan Su and Jing Luo and Hongwei Wang and Lu Cheng},
      year={2024},
      eprint={2403.01216},
      archivePrefix={arXiv},
      primaryClass={cs.CL},
      url={https://arxiv.org/abs/2403.01216}, 
}

@misc{wang2024conuconformaluncertaintylarge,
      title={ConU: Conformal Uncertainty in Large Language Models with Correctness Coverage Guarantees}, 
      author={Zhiyuan Wang and Jinhao Duan and Lu Cheng and Yue Zhang and Qingni Wang and Xiaoshuang Shi and Kaidi Xu and Hengtao Shen and Xiaofeng Zhu},
      year={2024},
      eprint={2407.00499},
      archivePrefix={arXiv},
      primaryClass={cs.CL},
      url={https://arxiv.org/abs/2407.00499}, 
}

@misc{openai_gpt5mini_2025,
  title        = {{GPT-5 Mini}},
  author       = {{OpenAI}},
  year         = {2025},
  howpublished = {\url{https://openai.com}},
  note         = {Large language model accessed via OpenAI API}
}

@book{vovk2005algorithmic,
  title={Algorithmic learning in a random world},
  author={Vovk, Vladimir and Gammerman, Alexander and Shafer, Glenn},
  year={2005},
  publisher={Springer}
}

@article{zhou2025conformal,
  title={Conformal prediction: A data perspective},
  author={Zhou, Xiaofan and Chen, Baiting and Gui, Yu and Cheng, Lu},
  journal={ACM Computing Surveys},
  year={2025},
  publisher={ACM New York, NY}
}

@inproceedings{
chen2023felm,
title={{FELM}: Benchmarking Factuality Evaluation of Large Language Models},
author={Shiqi Chen and Yiran Zhao and Jinghan Zhang and I-Chun Chern and Siyang Gao and Pengfei Liu and Junxian He},
booktitle={Thirty-seventh Conference on Neural Information Processing Systems Datasets and Benchmarks Track},
year={2023},
url={https://openreview.net/forum?id=jSO7Vgolc6}
}

@inproceedings{
grover2018stochastic,
title={Stochastic Optimization of Sorting Networks via Continuous Relaxations},
author={Aditya Grover and Eric Wang and Aaron Zweig and Stefano Ermon},
booktitle={International Conference on Learning Representations},
year={2019},
url={https://openreview.net/forum?id=H1eSS3CcKX},
}

@article{ShaferCP2008,
author = {Shafer, Glenn and Vovk, Vladimir},
title = {A Tutorial on Conformal Prediction},
year = {2008},
issue_date = {6/1/2008},
publisher = {JMLR.org},
volume = {9},
issn = {1532-4435},
journal = {J. Mach. Learn. Res.},
month = jun,
pages = {371–421},
numpages = {51}
}

@article{AngelopoulosCPIntro2023,
url = {http://dx.doi.org/10.1561/2200000101},
year = {2023},
volume = {16},
journal = {Foundations and Trends® in Machine Learning},
title = {Conformal Prediction: A Gentle Introduction},
doi = {10.1561/2200000101},
issn = {1935-8237},
number = {4},
pages = {494-591},
author = {Anastasios N. Angelopoulos and Stephen Bates}
}

@inproceedings{RomanoConformalizedQuantileRegression2019,
 author = {Romano, Yaniv and Patterson, Evan and Candes, Emmanuel},
 booktitle = {Advances in Neural Information Processing Systems},
 editor = {H. Wallach and H. Larochelle and A. Beygelzimer and F. d\textquotesingle Alch\'{e}-Buc and E. Fox and R. Garnett},
 pages = {},
 publisher = {Curran Associates, Inc.},
 title = {Conformalized Quantile Regression},
 url = {https://proceedings.neurips.cc/paper_files/paper/2019/file/5103c3584b063c431bd1268e9b5e76fb-Paper.pdf},
 volume = {32},
 year = {2019}
}

@inproceedings{MohriHashimotoConformalFactuality2024,
author = {Mohri, Christopher and Hashimoto, Tatsunori},
title = {Language models with conformal factuality guarantees},
year = {2024},
publisher = {JMLR.org},
booktitle = {Proceedings of the 41st International Conference on Machine Learning},
articleno = {1468},
numpages = {19},
location = {Vienna, Austria},
series = {ICML'24}
}

@inproceedings{
rubin-toles2025conformal,
title={Conformal Language Model Reasoning with Coherent Factuality},
author={Maxon Rubin-Toles and Maya Gambhir and Keshav Ramji and Aaron Roth and Surbhi Goel},
booktitle={The Thirteenth International Conference on Learning Representations},
year={2025},
url={https://openreview.net/forum?id=AJpUZd8Clb}
}

@inproceedings{
angelopoulos2024conformal,
title={Conformal Risk Control},
author={Anastasios Nikolas Angelopoulos and Stephen Bates and Adam Fisch and Lihua Lei and Tal Schuster},
booktitle={The Twelfth International Conference on Learning Representations},
year={2024},
url={https://openreview.net/forum?id=33XGfHLtZg}
}

@inproceedings{CherianLLMValidity2024,
author = {Cherian, John J. and Gibbs, Isaac and Cand\`{e}s, Emmanuel J.},
title = {Large language model validity via enhanced conformal prediction methods},
year = {2024},
isbn = {9798331314385},
publisher = {Curran Associates Inc.},
address = {Red Hook, NY, USA},
booktitle = {Proceedings of the 38th International Conference on Neural Information Processing Systems},
articleno = {3645},
numpages = {31},
location = {Vancouver, BC, Canada},
series = {NIPS '24}
}

@inproceedings{
stutz2022learning,
title={Learning Optimal Conformal Classifiers},
author={David Stutz and Krishnamurthy Dj Dvijotham and Ali Taylan Cemgil and Arnaud Doucet},
booktitle={International Conference on Learning Representations},
year={2022},
url={https://openreview.net/forum?id=t8O-4LKFVx}
}

@article{AngelopoulosLearnThenTest2025,
author = {Anastasios N. Angelopoulos and Stephen Bates and Emmanuel J. Cand{\`e}s and Michael I. Jordan and Lihua Lei},
title = {{Learn then test: Calibrating predictive algorithms to achieve risk control}},
volume = {19},
journal = {The Annals of Applied Statistics},
number = {2},
publisher = {Institute of Mathematical Statistics},
pages = {1641 -- 1662},
year = {2025},
doi = {10.1214/24-AOAS1998},
URL = {https://doi.org/10.1214/24-AOAS1998}
}

@inproceedings{GibbsAdaptiveCPDistributionShift2021,
 author = {Gibbs, Isaac and Candes, Emmanuel},
 booktitle = {Advances in Neural Information Processing Systems},
 editor = {M. Ranzato and A. Beygelzimer and Y. Dauphin and P.S. Liang and J. Wortman Vaughan},
 pages = {1660--1672},
 publisher = {Curran Associates, Inc.},
 title = {Adaptive Conformal Inference Under Distribution Shift},
 url = {https://proceedings.neurips.cc/paper_files/paper/2021/file/0d441de75945e5acbc865406fc9a2559-Paper.pdf},
 volume = {34},
 year = {2021}
}

@inproceedings{min-etal-2023-factscore,
    title = "{FA}ct{S}core: Fine-grained Atomic Evaluation of Factual Precision in Long Form Text Generation",
    author = "Min, Sewon  and
      Krishna, Kalpesh  and
      Lyu, Xinxi  and
      Lewis, Mike  and
      Yih, Wen-tau  and
      Koh, Pang  and
      Iyyer, Mohit  and
      Zettlemoyer, Luke  and
      Hajishirzi, Hannaneh",
    editor = "Bouamor, Houda  and
      Pino, Juan  and
      Bali, Kalika",
    booktitle = "Proceedings of the 2023 Conference on Empirical Methods in Natural Language Processing",
    month = dec,
    year = "2023",
    address = "Singapore",
    publisher = "Association for Computational Linguistics",
    url = "https://aclanthology.org/2023.emnlp-main.741/",
    doi = "10.18653/v1/2023.emnlp-main.741",
    pages = "12076--12100"
}

@misc{kadavath2022languagemodelsmostlyknow,
      title={Language Models (Mostly) Know What They Know}, 
      author={Saurav Kadavath and Tom Conerly and Amanda Askell and Tom Henighan and Dawn Drain and Ethan Perez and Nicholas Schiefer and Zac Hatfield-Dodds and Nova DasSarma and Eli Tran-Johnson and Scott Johnston and Sheer El-Showk and Andy Jones and Nelson Elhage and Tristan Hume and Anna Chen and Yuntao Bai and Sam Bowman and Stanislav Fort and Deep Ganguli and Danny Hernandez and Josh Jacobson and Jackson Kernion and Shauna Kravec and Liane Lovitt and Kamal Ndousse and Catherine Olsson and Sam Ringer and Dario Amodei and Tom Brown and Jack Clark and Nicholas Joseph and Ben Mann and Sam McCandlish and Chris Olah and Jared Kaplan},
      year={2022},
      eprint={2207.05221},
      archivePrefix={arXiv},
      primaryClass={cs.CL},
      url={https://arxiv.org/abs/2207.05221}, 
}

@inproceedings{azaria-mitchell-2023-internal,
    title = "The Internal State of an {LLM} Knows When It{'}s Lying",
    author = "Azaria, Amos  and
      Mitchell, Tom",
    editor = "Bouamor, Houda  and
      Pino, Juan  and
      Bali, Kalika",
    booktitle = "Findings of the Association for Computational Linguistics: EMNLP 2023",
    month = dec,
    year = "2023",
    address = "Singapore",
    publisher = "Association for Computational Linguistics",
    url = "https://aclanthology.org/2023.findings-emnlp.68/",
    doi = "10.18653/v1/2023.findings-emnlp.68",
    pages = "967--976"
}

@inproceedings{WeiChainofThought2022,
author = {Wei, Jason and Wang, Xuezhi and Schuurmans, Dale and Bosma, Maarten and Ichter, Brian and Xia, Fei and Chi, Ed H. and Le, Quoc V. and Zhou, Denny},
title = {Chain-of-thought prompting elicits reasoning in large language models},
year = {2022},
isbn = {9781713871088},
publisher = {Curran Associates Inc.},
address = {Red Hook, NY, USA},
booktitle = {Proceedings of the 36th International Conference on Neural Information Processing Systems},
articleno = {1800},
numpages = {14},
location = {New Orleans, LA, USA},
series = {NIPS '22}
}

@inproceedings{
wang2023selfconsistency,
title={Self-Consistency Improves Chain of Thought Reasoning in Language Models},
author={Xuezhi Wang and Jason Wei and Dale Schuurmans and Quoc V Le and Ed H. Chi and Sharan Narang and Aakanksha Chowdhery and Denny Zhou},
booktitle={The Eleventh International Conference on Learning Representations },
year={2023},
url={https://openreview.net/forum?id=1PL1NIMMrw}
}
\bibliographystyle{icml2026}

\newpage
\appendix
\onecolumn

\section{Proofs}
\label{sec:proofs}

\subsection{Statement and Proof of Theorem \ref{thm:calibration}}\label{sec:proof-calibration}
Fix ADG $\mathcal{G} = (V,E)$, false claim nodes $V^- \subseteq V$, scorer $\pi_\theta$, threshold grid $\mathcal{T}$, and graph family $\mathcal{U_T}$. Let $p_{v,\tau} = \sigma\left((\tau - r_v + \sqrt{T_p})/T_p\right)$ with\footnote{In practice, we let $\lambda$ vary as a hyperparameter. The theorem's condition could be guaranteed by making $\lambda$ differentiably approximate $1/(\sup(\mathcal{T}) - \inf(\mathcal{T}))$, then applying a limit to recover the hard nonconformity score.} $\sup(\lambda\mathcal{T})-\inf(\lambda\mathcal{T}) \leq 1$. Then:
\begin{equation} \label{eqn:calibration_convergence}
\lim_{\beta\rightarrow\infty}\lim_{\tau_s\rightarrow0}\lim_{T_p\rightarrow0^+}\tilde{\tau} = \nu(X, Y, \mathcal{U}_\mathcal{T})
\end{equation}
where $\nu(X, Y, \mathcal{U}_\mathcal{T})$ is the hard nonconformity score from CF.\footnote{The theorem requires a $\sqrt{T_p}$ margin in \eqref{eqn:softkeep}, but in practice a margin of 0 yields similar soft keep probabilities.} Setting $\tau_s = T_p^{s}$ and $\beta = T_p^a$ for $s\in(0,1), a<0$ yields single-limit convergence.

\begin{proof} 

For simplicity of notation set $T = T_p$. $\tau = r_{v_j}$ for some ${v_j}$. Checking limits, note as $T \rightarrow 0^+$, $p_{v,\tau} \longrightarrow \Theta(\tau - r_v)$ pointwise, where $\Theta(x)$ is the Heaviside step function with $\Theta(0) = 1$. 
\begin{case}
    $r_v > \tau$ and vertex $v$ is not coherently factual. Then 
    \begin{equation}
    r_v > \tau \Rightarrow \lim_{T \rightarrow 0+} p_{v,\tau} = 0. 
    \end{equation}
\end{case}
\begin{case}
    $r_v \leq \tau$ and vertex $v$ is  coherently factual. Then 
    \begin{equation}
    r_v \leq \tau \Rightarrow \lim_{T \rightarrow 0+} p_{v,\tau} = 1. 
    \end{equation}
\end{case}
After taking an ancestral geometric mean, $q_{v,\tau} = (\Pi_{u \in \text{Anc}(v) \cup \{v\}} \, p_{u,\tau}^{w_u})^{1/(|\text{Anc}(v)| + 1)}$ is sent to 0 if any of $v$ or $v$’s ancestors had risk score greater than $\tau$, and is sent to 1 otherwise. Then 
\begin{equation} \label{eqn: Q_tau limit}
    \lim_{T \rightarrow 0^+} Q_\tau =  \left\{ \begin{array}{rcl}
    0 & \mbox{for} & \exists v\in V^- \,|\, \forall u \in \text{Anc}(v) \cup \{v\}, r_{u} \leq \tau \\ 
    1 & \mbox{for} & \forall v\in V^-, \exists u \in \text{Anc}(v) \cup \{v\} \,|\, r_{u} > \tau
    \end{array}\right.,
\end{equation}
and $\tau_s$ has no influence. Then violation has that
\begin{equation}
    \lim_{T \rightarrow 0^+} V_\tau =  \left\{ \begin{array}{rcl}
    1 & \mbox{for} & U_\tau \text{ is not Coherently Factual} \\ 
    0 & \mbox{for} & U_\tau \text{ is Coherently Factual}
    \end{array}\right. .
\end{equation}
In the limit $\beta \rightarrow \infty$, $w_{\tau_i}^\text{cal} = \indicator \{s_{\tau_i} = \sup\{s_{\tau_{j}} \,|\, \tau_j \in \mathcal{T}\}\}$. Because $\sup(\lambda\mathcal{T})-\inf(\lambda\mathcal{T}) \leq 1$, if any subgraph $U_{\tau_i}$ is not CF, then $V_{\tau_i} = 1$, so the maximum $s_{\tau_i}$ corresponds to the greatest $\lambda\tau_{i}$ such that $V_{\tau_i} = 0$ and $U_{\tau_i}$ is CF. We lastly need the fact that
\begin{equation}
\forall \tau'\in \mathcal{T} \,|\, \tau' \leq \tau, U_\tau \text{ Coherently Factual} \Rightarrow U_{\tau'} \text{ Coherently Factual},
\end{equation}
which holds since $\tau' \leq \tau$ and $Q_\tau = 1$ implies $Q_{\tau'}$ = 1. Therefore
\begin{equation}
\lim_{\beta\rightarrow\infty}\lim_{\tau_s\rightarrow 0^+}\lim_{T\rightarrow 0^+}\tilde{\tau} = \nu(X, Y, \mathcal{U}_\mathcal{T})
\end{equation}

It remains to show that setting $\tau_s = T^{s}$ and $\beta = T^a$ for $s\in(0,1), a<0$ yields single-limit convergence. Note that $\beta(T)\rightarrow\infty$ and $\tau_s \rightarrow 0$ as $T \rightarrow 0$, let $s_{\text{max}}(T) = \argmax_{s\in\{s_\tau\}_{\tau\in\mathcal{T}}} \lim_{T \rightarrow 0^+} s_\tau$, and let $n_\text{max}$ be the number of $s_\tau$ with the same limit as $s_{\text{max}}$. Then for $\tau_\text{max}$ corresponding to $s_\text{max}$,
\begin{align*}
    \lim_{T \rightarrow 0^+} \tilde{\tau} = \lim_{T \rightarrow 0^+} \frac{\sum_{\tau \in \mathcal{T}} \exp(\beta s_\tau) \tau}{\sum_{\tau \in \mathcal{T}} \exp(\beta s_\tau)} = \lim_{T \rightarrow 0^+} \frac{\sum_{\tau \in \mathcal{T}} \exp(\beta s_\tau - \beta s_\text{max}) \tau}{\sum_{\tau \in \mathcal{T}} \exp(\beta s_\tau - \beta s_\text{max})} = \frac{\tau_\text{max} n_\text{max}}{n_\text{max}} = \tau_\text{max}, 
\end{align*}
if $s_\text{max}$ is well defined, which is seen through convergence of $Q_\tau^{1/\tau_s}$. Now it remains to show that 
\begin{equation} \label{eqn: Q_tau^exp}
    \lim_{T \rightarrow 0^+} Q_\tau^{1/\tau_s} =  \left\{ \begin{array}{rcl}
    0 & \mbox{for} & \exists v\in V^- \,|\, \forall u \in \text{Anc}(v) \cup \{v\}, r_{u} \leq \tau \\ 
    1 & \mbox{for} & \forall v\in V^-, \exists u \in \text{Anc}(v) \cup \{v\} \,|\, r_{u} > \tau
    \end{array}\right.,
\end{equation}
similar to equation \ref{eqn: Q_tau limit} but with an exponent that could prevent convergence to 1. We first show that in the case where $Q_\tau \rightarrow 1$, we also have $Q_\tau^{1/\tau_s} \rightarrow 1$. So $\forall v\in V^-, \exists u \in \text{Anc}(v) \cup \{v\} \,|\, r_{u} > \tau$. We can write
\begin{align*}
    \lim_{T \rightarrow 0^+} Q_\tau^{|V^-|/\tau_s} = \Pi_{v\in V^-} \lim_{T\rightarrow 0^+} (1-q_{v,\tau})^{1/\tau_s} .
\end{align*}
Set $L_v = \lim_{T \rightarrow 0^+} (1-q_{v,\tau})^{1/\tau_s}$. We want $L_v \rightarrow 1$ for all $v \in V^-$. Notice we can express $L_v$ in the following manner:

\begin{equation} \label{eqn: L_v}
    L_v = \lim_{T \rightarrow 0^+} (1-f_1(T)f_2(T)...)^{1/\tau_s},
\end{equation}
where each $f_i=p_{u,\tau}^{c_u}$ with appropriate constant $c_u>0$. At least one $f_i$ has $\lim_{T\rightarrow0^+} f_i(T) = 0$ and  others approach 1 or 0. Now
\begin{align*}
    L_v = \exp \lim_{T \rightarrow 0^+} \ln (1-f_1(T)f_2(T)...)/\tau_s = \exp \lim_{T \rightarrow 0^+} \frac{-1}{1-f_1(T)f_2(T)...} (f_1(T)f_2(T)...)(\sum_i f_i(T)'/f_i(T))(\tau_s')^{-1}
\end{align*}
by $\text{L'H}\hat{\text{o}}\text{pital's}$ rule. Now $\frac{-1}{1-f_1(T)f_2(T)...} \rightarrow -1$, and $\tau_s' = \frac1s T^{s-1} \rightarrow \infty$ so $(\tau_s')^{-1}\rightarrow 0$. The remaining terms are:
\begin{align*}
    (f_1(T)f_2(T)...)\sum_i f_i(T)'/f_i(T) = \sum_i f_1(T)f_2(T)...\hat{f}_i(T)...f_n(T)f_i(T)'
\end{align*}
where the hat indicates omission of that term. Each $f_i$ approaches 1 or 0, so we must check $f_i(T)'$ for convergence. Explicitly, for an $f_i$ there is some $u\in \{v\}\cup \text{Anc}(v)$ such that 
\begin{align*}
f_i(T) = p_{u,\tau}^{w_u/(1+\gamma|\text{Anc}(v)|)} = \frac{1}{(1+\exp((r_u - \tau - \sqrt{T})/T))^{c_2}} = \frac{1}{(1+\exp(c_1/T - 1/\sqrt{T}))^{c_2}} ,
\end{align*}
with $c_2 = \frac{w_u}{1+\gamma|\text{Anc}(v)|} \in (0,1)$ and $c_1 = r_u-\tau$. After differentiation and taking the limit, we find
\begin{align*}
    \lim_{T\rightarrow 0^+} f_i(T)' = 0. 
\end{align*}
Thus $L_v = \exp(0) = 1$ for each $v\in V^-$ when $\forall v\in V^-, \exists u \in \text{Anc}(v) \cup \{v\} \,|\, r_{u} > \tau$. Then $\lim_{T\rightarrow 0^+} Q_\tau^{1/\tau_s} = 1$. 

The remaining case to prove is when $\exists v\in V^- \,|\, \forall u \in \text{Anc}(v) \cup \{v\}, r_{u} \leq \tau$. Here, there exists a $v\in V^-$ such that each $f_i$ in equation \ref{eqn: L_v} approaches 1, and thus $L_v$ approaches 0 and $\lim_{T\rightarrow 0^+} Q_\tau^{1/\tau_s} = 0$. This completes the proof of equation \ref{eqn: Q_tau^exp} and thus that setting $\tau_s = T^{s}$ and $\beta = T^a$ for $s\in(0,1), a<0$ yields
\begin{equation}
    \lim_{T\rightarrow 0^+}\tilde{\tau} = \nu(X, Y, \mathcal{U}_\mathcal{T}).
\end{equation}

\end{proof}

\subsection{Statement and Proof of Theorem \ref{thm:prediction}}\label{sec:proof-prediction}
Fix graph $\mathcal{G} = (V, E)$, risk scores $\{r_v\}_{v\in V}$, and calibrated threshold $\tau_\alpha$. Suppose $\{\tau\in\mathcal{T} \,|\, \tau<\tau_\alpha\}\neq\emptyset$ and $\forall u\in V, w_u \neq 0$. Let\footnote{This requires a $\sqrt{\tau_z}$ margin in equation \ref{eqn: w_tau unnorm}. In practice, this was left at zero.} $w_\tau^{\text{(unnorm)}} = \exp(\beta \cdot \tau) \cdot \sigma\left(\frac{\tau_\alpha - \tau - \sqrt{\tau_z}}{\tau_z}\right)$. Then:
\begin{equation}
\label{eqn:prediction_convergence}
     \{v\in V \mid \lim_{T_p\to0^+}\lim_{\beta\to\infty}\lim_{\tau_z\to0^+} q_v \in [0.5, 1]\} = U_{\text{filtered}},
\end{equation}
where $U_{\text{filtered}}$ is the CF prediction.\footnote{This theorem uses equation \eqref{eqn:softkeep} in its original form; extending it to support the margin from Theorem \ref{thm:calibration} is straightforward.} Setting $\tau_z = T_p^{ab}$, $\beta = 1/T_p^a$ for $a>0$, $b>2$ yields single-limit convergence. 

\begin{proof} For simplicity of notation let $T=T_p$ and $\hat{\tau}_\alpha = \tau_\alpha$. Since
\begin{equation}
    \lim_{T\rightarrow 0^+} p_{v,\tau} = \left\{ \begin{array}{rcl}
    1 & \mbox{for} & r_v < \tau \\ 
    \frac12 & \mbox{for} & r_v = \tau \\
    0 & \mbox{for} & r_v > \tau
    \end{array}\right. ,
\end{equation}
we have 
\begin{equation} \label{eqn: q_v,tau limit}
    \lim_{T\rightarrow 0^+} q_{v,\tau} \in \left\{ \begin{array}{rcl}
    \{0\} & \mbox{if} & {\exists u\in \text{Anc}(v)\cup\{v\} \,:\, r_u > \tau} \\ 
    {[\frac12, 1]} & \mbox{if} & {\forall  u\in \text{Anc}(v)\cup\{v\},\, r_u \leq \tau}
    \end{array}\right. .
\end{equation}
Note 
\begin{equation}
    \lim_{\tau_z \rightarrow 0^+} w_\tau^{(\text{unnorm})} = \left\{ \begin{array}{rcl}
    0 & \mbox{for} & {\tau\geq\tau_\alpha} \\  
    \exp(\beta\tau) & \mbox{for} & {\tau<\tau_\alpha}
    \end{array}\right. .
\end{equation}
Now as $\beta\rightarrow\infty$, $w_\tau \rightarrow \indicator \{\tau=\sup\{\tau\in\mathcal{T} \,|\, \tau< \tau_\alpha\}\}/n_j$, where $n_j = |\{\tau\in\mathcal{T} \,|\, \tau=\sup\{\tau\in\mathcal{T} \,|\, \tau<\tau_\alpha\}\}|$. Since identical $\tau$ give identical coefficients $q_{v,\tau}$ for each $w_\tau$ in equation \ref{eqn: w_tau}, we find that
\begin{equation} \label{eqn: q_v limit}
    \lim_{\beta\rightarrow \infty} \lim_{\tau_z\rightarrow 0^+} q_{v} = q_{v,\tau} \,:\, \tau = \sup\{\tau\in\mathcal{T} \,|\, \tau< \tau_\alpha\}
\end{equation}
Then $\{v\in V \,|\, \lim_{T\rightarrow0^+}\lim_{\beta\rightarrow\infty}\lim_{\tau_z\rightarrow0^+} q_v \in[0.5, 1] \} = \{v\in V \,|\, \forall u\in\text{Anc}(v)\cup\{v\},\, r_u \leq \sup\{\tau\in\mathcal{T} \,|\, \tau< \tau_\alpha\}\}$, which is equivalent to $U_\text{filtered}$. 

It remains to show that setting $\beta=1/T^a$ and $\tau_z = T^{ab}$ for $a> 0, b>1$ yields single-limit convergence. We may write
\begin{equation}
    q_v = \frac{\sum_\tau \exp(\tau/T^a)\sigma(\frac{\tau_\alpha - \tau - T^{ab/2}}{T^{ab}}) q_{v,\tau}}{\sum_\tau \exp(\tau/T^a)\sigma(\frac{\tau_\alpha - \tau-T^{ab/2}}{T^{ab}})}. 
\end{equation}
Let $\tau_j = \sup\{\tau\in\mathcal{T} \,|\, \tau<\tau_\alpha\}$ so 
\begin{equation}
    q_v = \frac{\sum_\tau \exp(\tau/T^a - \tau_j/T^a)\sigma(\frac{\tau_\alpha - \tau-T^{ab/2}}{T^{ab}}) q_{v,\tau}}{\sum_\tau \exp(\tau/T^a - \tau_j/T^a)\sigma(\frac{\tau_\alpha - \tau-T^{ab/2}}{T^{ab}})}. 
\end{equation}
Only terms matching $\tau_j$ survive in both sums. Terms with $\tau<\tau_j$ approach 0 via the first exponential, and terms with $\tau>\tau_j$ also have $\tau\geq\tau_\alpha$, so
\begin{equation}
    \lim_{T\rightarrow 0^+} \exp(\frac{\tau - \tau_j}{T^a})\sigma(\frac{\tau_\alpha-\tau-T^{ab/2}}{T^{ab}}) = \lim_{T\rightarrow 0^+} \frac1{e^{(\tau_j-\tau)/T^a} + \exp(\frac{(\tau-\tau_\alpha) + (\tau_j-\tau)T^b + T^{ab/2}}{T^{ab}})} = 0
\end{equation}
since $b>2$ and $a>0$. Then
\begin{equation}
    \lim_{T\rightarrow 0^+} q_v = \frac{\sum_\tau \indicator\{\tau = \tau_j\} \lim_{T\rightarrow 0^+} q_{v,\tau}}{\sum_\tau \indicator\{\tau = \tau_j\}} = \lim_{T\rightarrow 0^+}q_{v,\tau_j}. 
\end{equation}
Since equation ~\ref{eqn: q_v,tau limit} holds in this case and $\tau_j$ is identical to the $\tau$ of equation \ref{eqn: q_v limit}, we have
\begin{equation}
    \{v\in V \,|\, \lim_{T\rightarrow0^+} q_v \in[0.5, 1] \} = U_\text{filtered}
\end{equation}

\end{proof}

\section{Algorithms}
\label{sec:algorithms}

\subsection{Differentiable Calibration}\label{sec:alg-calibration}
\begin{algorithm}[H]
\caption{Differentiable Calibration}
\label{alg:calibration}
\begin{algorithmic}[1]
\STATE \textbf{Input:} Calibration set $\mathcal{D}_{\text{cal}} = \{(\mathcal{G}_i, \{x_v^{(i)}\}, \{y_v^{(i)}\})\}_{i=1}^{n}$, scorer $\pi_\theta$, target miscoverage $\alpha$, hyperparameters $C, T_p, \beta, \gamma, \lambda, \tau_s, \rho, \epsilon, m \in \mathbb{R}$
\STATE \textbf{Output:} Calibrated threshold $\hat{\tau}_\alpha$

\FOR{each graph $(\mathcal{G}, \{x_v\}, \{y_v\})$ in $\mathcal{D}_{\text{cal}}$}
    \STATE \textbf{Step 1: Scoring and risk.}
    \FOR{$v \in V$}
        \STATE \quad $r_v = C - \pi_\theta(x_v)$
    \ENDFOR

    \STATE \textbf{Step 2: Build $\tau$-grid.}
    \STATE $\mathcal{T} = \{\min_v r_v - m, \; \{r_v\}_{v\in V}, \; \max_v r_v + m\}$ (sorted)

    \STATE \textbf{Step 3: Soft keep.}
    \FOR{$(v,\tau) \in V \times \mathcal{T}$}
        \STATE $p_{v,\tau} = \sigma\left(\frac{\tau - r_v}{T_p}\right)$
    \ENDFOR

    \STATE \textbf{Step 4: Ancestor coherence.}
    \FOR{$(v,\tau) \in V \times \mathcal{T}$}
        \STATE $w_u = \begin{cases} \gamma & \text{if } u \in \text{Anc}(v) \\ 1 & \text{if } u = v \end{cases}$ for all $u \in \text{Anc}(v) \cup \{v\}$
        \STATE $\log q_{v,\tau} = \frac{\sum_{u \in \text{Anc}(v)\cup\{v\}} w_u \cdot \log(p_{u,\tau} + \epsilon)}{\sum_{u \in \text{Anc}(v)\cup\{v\}} w_u}$
    \ENDFOR

    \STATE \textbf{Step 5: Validity on negatives.}
    \STATE $V^- = \{v \in V : y_v = 0\}$
    \FOR{$\tau \in \mathcal{T}$}
        \STATE $\log Q_{\tau} = \frac{1}{|V^-|}\sum_{v\in V^-} \log(1 - \exp(\log q_{v,\tau}) + \epsilon)$
    \ENDFOR

    \STATE \textbf{Step 6: Violation and soft supremum.}
    \FOR{$\tau \in \mathcal{T}$}
        \STATE $V_{\tau} = 1 - \exp(\log (Q_{\tau}) / \tau_s)$
    \ENDFOR
    \STATE Normalize: $\hat{\tau}_{\text{norm}} = \text{MinMax}(\mathcal{T})$, $\hat{V} = \text{MinMax}(\{V_\tau\})$ \hfill \COMMENT{Scale to $[0,1]$}
    \FOR{$\tau \in \mathcal{T}$}
        \STATE $s_{\tau} = \hat{\tau}_{\text{norm}} - \lambda \cdot \hat{V}_{\tau}$
        \STATE $w_{\tau}^\text{cal} = \frac{\exp(\beta s_{\tau})}{\sum_{\tau' \in \mathcal{T}} \exp(\beta s_{\tau'})}$
    \ENDFOR
    \STATE $\tilde{\tau} = \sum_{\tau \in \mathcal{T}} w_{\tau}^\text{cal} \cdot \tau$
    \STATE Store $\tilde{\tau}$
\ENDFOR

\STATE \textbf{Calibration via soft quantile:}
\STATE $q = \frac{\lceil (n+1)(1-\alpha) \rceil}{n}$
\STATE $\hat{\tau}_\alpha = \text{SoftQuantile}(\{\tilde{\tau}_1, \ldots, \tilde{\tau}_n\}, q, \rho)$

\STATE \textbf{return} $\hat{\tau}_\alpha$
\end{algorithmic}
\end{algorithm}

\subsection{Differentiable Prediction}\label{sec:alg-prediction}
\begin{algorithm}[H]
\caption{Differentiable Prediction}
\label{alg:prediction}
\begin{algorithmic}[1]
\STATE \textbf{Input:} Test set $\mathcal{D}_{\text{test}} = \{(\mathcal{G}_i, \{x_v^{(i)}\})\}_{i=1}^{M}$, scorer $\pi_\theta$, calibrated threshold $\hat{\tau}_\alpha$, hyperparameters $C, T_p, \beta, \gamma, \tau_z, \epsilon, m \in \mathbb{R}$
\STATE \textbf{Output:} Soft retention probabilities $\{\mathbf{q}^{(i)}\}_{i=1}^{M}$ where $\mathbf{q}^{(i)} \in [0,1]^{|V_i|}$

\FOR{each graph $(\mathcal{G}, \{x_v\})$ in $\mathcal{D}_{\text{test}}$}
    \STATE \textbf{Step 1: Scoring and risk.}
    \FOR{$v \in V$}
        \STATE \quad $r_v = C - \pi_\theta(x_v)$
    \ENDFOR

    \STATE \textbf{Step 2: Build $\tau$-grid.}
    \STATE $\mathcal{T} = \{\min_v r_v - m, \; \{r_v\}_{v\in V}, \; \max_v r_v + m\}$ (sorted)

    \STATE \textbf{Step 3: Soft keep.}
    \FOR{$(v,\tau) \in V \times \mathcal{T}$}
        \STATE $p_{v,\tau} = \sigma\left(\frac{\tau - r_v}{T_p}\right)$
    \ENDFOR

    \STATE \textbf{Step 4: Ancestor coherence.}
    \FOR{$(v,\tau) \in V \times \mathcal{T}$}
        \STATE $w_u = \begin{cases} \gamma & \text{if } u \in \text{Anc}(v) \\ 1 & \text{if } u = v \end{cases}$ for all $u \in \text{Anc}(v) \cup \{v\}$
        \STATE $\log q_{v,\tau} = \frac{\sum_{u \in \text{Anc}(v)\cup\{v\}} w_u \cdot \log(p_{u,\tau} + \epsilon)}{\sum_{u \in \text{Anc}(v)\cup\{v\}} w_u}$
    \ENDFOR

    \STATE \textbf{Step 5: Soft gated argmax.}
    \FOR{$\tau \in \mathcal{T}$}
        \STATE $w_\tau^{(\text{unnorm})} = \exp(\beta \cdot \tau) \cdot \sigma\left(\frac{\hat{\tau}_\alpha - \tau}{\tau_z}\right)$
    \ENDFOR
    \STATE Normalize: $w_\tau = \frac{w_\tau^{(\text{unnorm)}}}{\sum_{\tau' \in \mathcal{T}} w_{\tau'}^{(\text{unnorm})}}$

    \STATE \textbf{Step 6: Weighted combination.}
    \FOR{$v \in V$}
        \STATE $q_v = \sum_{\tau \in \mathcal{T}} w_\tau \cdot q_{v,\tau}$
    \ENDFOR
    \STATE Store $\mathbf{q} = (q_v)_{v \in V}$
\ENDFOR

\STATE \textbf{return} $\{\mathbf{q}^{(1)}, \ldots, \mathbf{q}^{(M)}\}$
\end{algorithmic}
\end{algorithm}

\subsection{Training Procedure}\label{sec:alg-training}
\begin{algorithm}[H]
\caption{Training $\pi_{\theta}$}
\label{alg:training}
\begin{algorithmic}[1]
\STATE \textbf{Input:} Dataset $\mathcal{D} = \{(\mathcal{G}_i, \{x_v^{(i)}\}, \{y_v^{(i)}\})\}_{i=1}^{N}$, scorer $\pi_\theta$, miscoverage $\alpha$, learning rate $\eta$, epochs $E$, hyperparameters $C, T_p, \beta, \gamma, \lambda, \tau_s, \rho, \epsilon, m$
\STATE \textbf{Output:} Trained scorer $\pi_\theta$

\STATE \textbf{Step 1: Data split.}
\STATE Partition $\mathcal{D}$ into train $\mathcal{D}_{\text{train}}$ and validation $\mathcal{D}_{\text{val}}$ sets

\STATE \textbf{Step 2: Initialize.}
\STATE Initialize scorer parameters $\theta$ and Adam optimizer with learning rate $\eta$

\FOR{epoch $e = 1, \ldots, E$}
    \STATE \textbf{Step 3: Training.}
    \STATE Split $\mathcal{D}_{\text{train}}$ into calibration $\mathcal{D}_{\text{cal}}$ and prediction $\mathcal{D}_{\text{pred}}$ subsets
    \STATE Calibrate: $\hat{\tau}_\alpha \gets \text{DifferentiableCalibration}(\mathcal{D}_{\text{cal}}, \pi_\theta, \alpha, \ldots)$ (Algorithm~\ref{alg:calibration})
    \STATE Predict: $\{\mathbf{p}^{(i)}\} \gets \text{DifferentiablePrediction}(\mathcal{D}_{\text{pred}}, \pi_\theta, \hat{\tau}_\alpha, \ldots)$ (Algorithm~\ref{alg:prediction})
    \STATE Compute loss: $\mathcal{L} = -\frac{1}{|\mathcal{D}_{\text{pred}}|} \sum_{i \in \mathcal{D}_{\text{pred}}} \sum_{v \in V_i} p_v^{(i)} \cdot y_v^{(i)}$
    \STATE Backpropagate: $\nabla_\theta \mathcal{L}$ and update $\theta \gets \theta - \eta \cdot \nabla_\theta \mathcal{L}$
\ENDFOR

\STATE \textbf{return} Trained scorer $\pi_\theta$
\end{algorithmic}
\end{algorithm}

\section{Figures}
\label{sec:figures}

\subsection{Training Flow}
\label{sec:training_flow}
\begin{figure}[H]
\centering
\scalebox{1.2}{
\begin{tikzpicture}[
    node distance=0.4cm and 1.0cm,
    box/.style={rectangle, draw, minimum width=2.4cm, minimum height=0.55cm, align=center, font=\normalsize},
    stage/.style={box, fill=blue!10},
    data/.style={box, fill=gray!10},
    loss/.style={box, fill=red!15},
    arrow/.style={->, >=stealth, thick},
    backward/.style={->, >=stealth, red!70, dashed, line width=0.8pt}
]

\node[data] (features) {Features $\mathbf{x}_v$};

\node[stage, below=of features] (scorer) {Scorer $\pi_\theta$};

\node[stage, below=of scorer] (risk) {Risk Computation};

\node[stage, below left=1.0cm and 1.2cm of risk] (tau_grid) {Tau Grid $\mathcal{T}$};
\node[stage, below=of tau_grid] (soft_keep_cal) {Soft Keep};
\node[stage, below=of soft_keep_cal] (coherence_cal) {Ancestor Coherence};
\node[stage, below=of coherence_cal] (validity) {Validity on Negatives};
\node[stage, below=of validity] (violation) {Violation Mapping};
\node[stage, below=of violation] (supremum) {Soft Supremum};
\node[stage, below=of supremum] (quantile) {Soft Quantile};

\node[stage, below right=1.0cm and 1.2cm of risk] (tau_grid_pred) {Tau Grid $\mathcal{T}$};
\node[stage, below=of tau_grid_pred] (soft_keep_pred) {Soft Keep};
\node[stage, below=of soft_keep_pred] (coherence_pred) {Ancestor Coherence};
\node[stage, below=of coherence_pred] (argmax) {Soft Argmax};
\node[data, below=of argmax] (final_probs) {Retention Probs $q_v$};

\node[loss, below=0.5cm of quantile, xshift=4cm] (loss_node) {Retention Loss $\mathcal{L}$};

\draw[arrow] (features) -- (scorer);
\draw[arrow] (scorer) -- (risk);
\draw[arrow] (risk) -| (tau_grid);
\draw[arrow] (tau_grid) -- (soft_keep_cal);
\draw[arrow] (soft_keep_cal) -- (coherence_cal);
\draw[arrow] (coherence_cal) -- (validity);
\draw[arrow] (validity) -- (violation);
\draw[arrow] (violation) -- (supremum);
\draw[arrow] (supremum) -- (quantile);
\draw[arrow] (quantile) -- (loss_node);

\draw[arrow] (risk) -| (tau_grid_pred);
\draw[arrow] (tau_grid_pred) -- (soft_keep_pred);
\draw[arrow] (soft_keep_pred) -- (coherence_pred);
\draw[arrow] (coherence_pred) -- (argmax);
\draw[arrow] (argmax) -- (final_probs);
\draw[arrow] (final_probs) -- (loss_node);

\draw[arrow] (quantile.east) -- ++(1.5,0) |- (argmax.west) 
    node[pos=0.2, above, xshift=-8pt, font=\small] {$\hat{\tau}_\alpha$};

\draw[backward] ([xshift=3pt]loss_node.north) -- ([xshift=3pt]final_probs.south);
\draw[backward] ([xshift=3pt]final_probs.north) -- ([xshift=3pt]argmax.south);
\draw[backward] ([xshift=3pt]argmax.north) -- ([xshift=3pt]coherence_pred.south);
\draw[backward] ([xshift=3pt]coherence_pred.north) -- ([xshift=3pt]soft_keep_pred.south);
\draw[backward] ([xshift=3pt]soft_keep_pred.north) -- ([xshift=3pt]tau_grid_pred.south);
\draw[backward] (tau_grid_pred.north) -| ([xshift=3pt]risk.east);

\draw[backward] ([xshift=-3pt]loss_node.north) -- ([xshift=-3pt]quantile.south);
\draw[backward] ([xshift=-3pt]quantile.north) -- ([xshift=-3pt]supremum.south);
\draw[backward] ([xshift=-3pt]supremum.north) -- ([xshift=-3pt]violation.south);
\draw[backward] ([xshift=-3pt]violation.north) -- ([xshift=-3pt]validity.south);
\draw[backward] ([xshift=-3pt]validity.north) -- ([xshift=-3pt]coherence_cal.south);
\draw[backward] ([xshift=-3pt]coherence_cal.north) -- ([xshift=-3pt]soft_keep_cal.south);
\draw[backward] ([xshift=-3pt]soft_keep_cal.north) -- ([xshift=-3pt]tau_grid.south);
\draw[backward] (tau_grid.north) -| ([xshift=-3pt]risk.west);

\draw[backward] ([xshift=3pt]risk.north) -- ([xshift=3pt]scorer.south) node[midway, right, font=\small, text=red!70] {$\nabla_\theta$};

\node[draw, dashed, thick, fit={(tau_grid) (soft_keep_cal) (coherence_cal) (validity) (violation) (supremum) (quantile)}, inner sep=0.15cm, label={[font=\small\bfseries]above:Calibration}] {};

\node[draw, dashed, thick, fit={(tau_grid_pred) (soft_keep_pred) (coherence_pred) (argmax) (final_probs)}, inner sep=0.15cm, label={[font=\small\bfseries]above:Prediction}] {};

\node[below=0.15cm of loss_node, font=\large, align=center] {
    \textcolor{black}{$\rightarrow$} Forward pass \quad
    \textcolor{red!70}{$\dashrightarrow$} Gradient flow to $\theta$
};

\end{tikzpicture}
}
\caption{Gradient flow through DCF training.}
\label{fig:training_flow}
\end{figure}
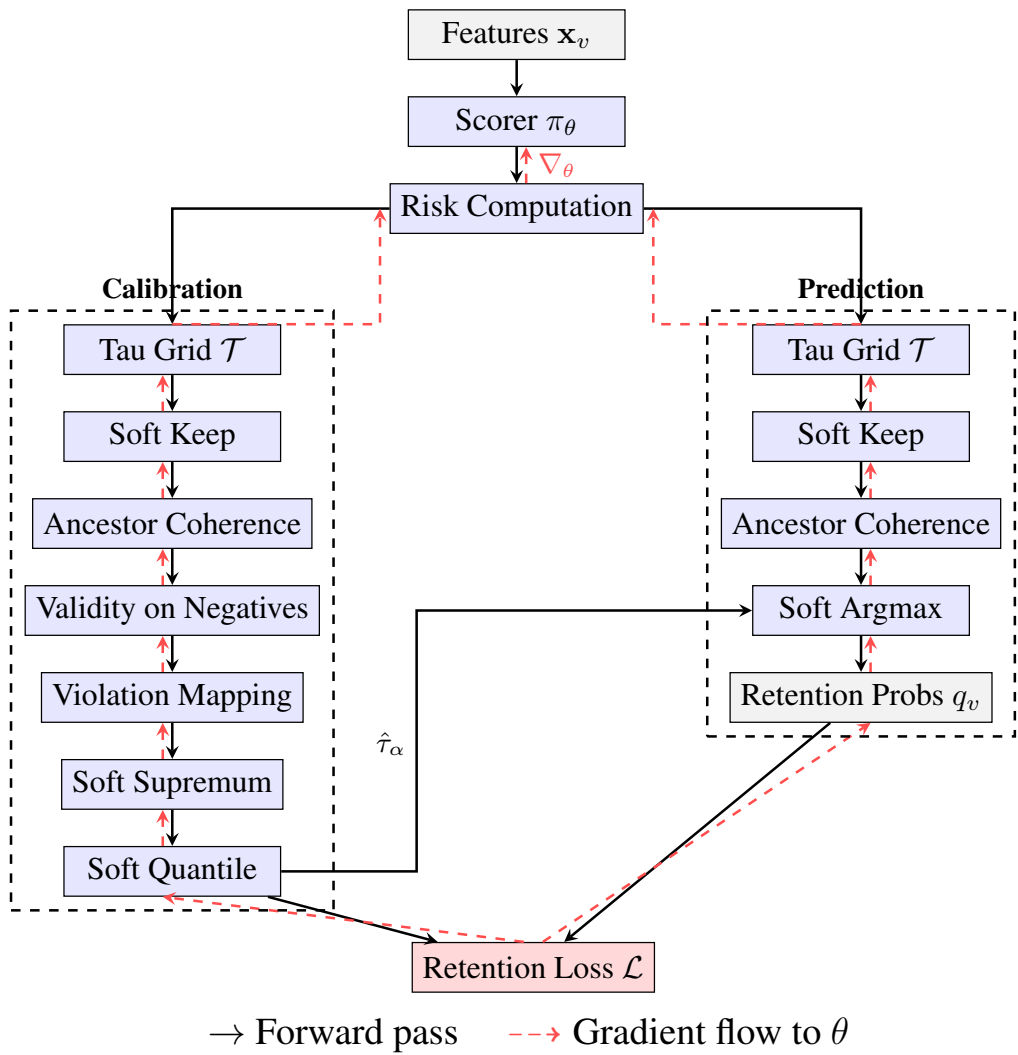

\subsection{Example Approximate Deducibility Graph}
\begin{figure}
\centering
\scalebox{1.0}{
\begin{tikzpicture}[
    node distance=0.5cm and 0.5cm,
    claim/.style={circle, draw, minimum size=0.6cm, font=\small, fill=blue!10},
    arrow/.style={->, >=stealth, thick}
]

\node[text width=6.5cm, font=\small, align=left] (problem) at (-1.5,0) {
    \textbf{Problem:} The endpoints of a segment are $(1,4)$ and $(1,10)$. What is the sum of the coordinates of the midpoint?\\[3pt]
    \textbf{Answer:} 8
};

\node[below=0cm of problem, text width=6.5cm, font=\small, align=left, anchor=north] (claims) {
    \textbf{Claims:}\\[2pt]
    0: Midpoint formula\\[1pt]
    1: $\frac{1+1}{2} = 1$\\[1pt]
    2: $\frac{4+10}{2} = 7$\\[1pt]
    3: Midpoint is $(1, 7)$\\[1pt]
    4: Sum: $1 + 7 = 8$
};

\node[claim] (c0) at (3.5, 0.3) {0};
\node[claim, below left=of c0] (c1) {1};
\node[claim, below right=of c0] (c2) {2};
\node[claim, below right=of c1] (c3) {3};
\node[claim, below=of c3] (c4) {4};

\draw[arrow] (c0) -- (c1);
\draw[arrow] (c0) -- (c2);
\draw[arrow] (c1) -- (c3);
\draw[arrow] (c2) -- (c3);
\draw[arrow] (c3) -- (c4);

\end{tikzpicture}
}
\caption{Example ADG for a MATH problem. Nodes represent atomic claims; edges indicate logical dependencies.}
\label{fig:example_graph}
\end{figure}
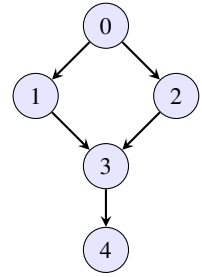

\subsection{Conformal Factuality Pipeline}
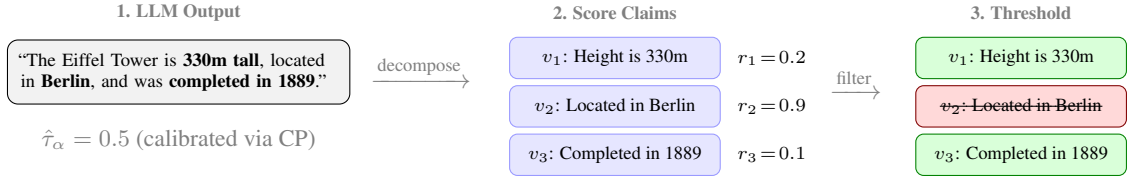
\begin{figure*}[!t]
\centering
\scalebox{1.0}{
\begin{tikzpicture}[
    node distance=0.4cm,
    box/.style={draw, rounded corners=3pt, minimum height=0.55cm, font=\scriptsize},
    kept/.style={box, fill=green!15, draw=green!50!black},
    removed/.style={box, fill=red!15, draw=red!50!black},
    neutral/.style={box, fill=blue!10, draw=blue!40},
    stage/.style={font=\scriptsize\bfseries, text=gray},
    arrow/.style={->, thick, >=stealth, gray}
]

\node[stage] (s1) at (0, 0) {1. LLM Output};
\node[below=0.12cm of s1, draw, rounded corners=4pt, fill=gray!10, text width=4.2cm, align=left, font=\scriptsize, inner sep=5pt] (output) {
    ``The Eiffel Tower is \textbf{330m tall}, located in \textbf{Berlin}, and was \textbf{completed in 1889}.''
};

\node[right=0.15cm of output, font=\normalsize, text=gray] (arr1) {$\xrightarrow{\text{decompose}}$};

\node[stage] at (5.8, 0) (s2) {2. Score Claims};

\node[neutral, minimum width=2.8cm] at (5.8, -0.6) (c1) {$v_1$: Height is 330m};
\node[right=0.1cm of c1, font=\scriptsize] (r1) {$r_1\!=\!0.2$};

\node[neutral, minimum width=2.8cm, below=0.08cm of c1] (c2) {$v_2$: Located in Berlin};
\node[right=0.1cm of c2, font=\scriptsize] (r2) {$r_2\!=\!0.9$};

\node[neutral, minimum width=2.8cm, below=0.08cm of c2] (c3) {$v_3$: Completed in 1889};
\node[right=0.1cm of c3, font=\scriptsize] (r3) {$r_3\!=\!0.1$};

\node[font=\normalsize, text=gray] at (9.0, -0.95) (arr2) {$\xrightarrow{\text{filter}}$};

\node[stage] at (11.2, 0) (s3) {3. Threshold};

\node[kept, minimum width=2.8cm] at (11.2, -0.6) (f1) {$v_1$: Height is 330m};
\node[removed, minimum width=2.8cm, below=0.08cm of f1] (f2) {\sout{$v_2$: Located in Berlin}};
\node[kept, minimum width=2.8cm, below=0.08cm of f2] (f3) {$v_3$: Completed in 1889};

\node[below=1.2cm of s1, font=\small, text=gray] (thresh) {$\hat{\tau}_\alpha = 0.5$ (calibrated via CP)};

\end{tikzpicture}
}
\caption{LLM conformal factuality. An LLM response is decomposed into atomic subclaims, each assigned a risk score. The threshold $\hat{\tau}_\alpha$ is calibrated via CP to guarantee a $1-\alpha$ factuality rate. Claims exceeding this threshold are removed (e.g., the hallucinated ``Berlin'').}
\label{fig:llm_cf_pipeline}
\end{figure*}

\section{Experimental Details}
\label{sec:experimental_details}

\subsection{Feature Descriptions}
\label{sec:feature_details}

Each claim has 31 features organized into four categories:

\paragraph{Base Scoring (1 feature).}
\texttt{frequency-score}: Self-consistency frequency across multiple LLM generations.

\paragraph{Semantic Coherence (8 features).}
\texttt{claim\_index} (position in chain), \texttt{coherent\_to\_ancestors} (logical consistency), \texttt{inference\_gap\_size} (missing steps), \texttt{has\_missing\_dependencies}, \texttt{has\_unnecessary\_dependencies}, \texttt{problem\_relevance}, \texttt{uses\_problem\_data}, and additional validation flags.

\paragraph{Domain Indicators (12 features).}
Binary flags for mathematical topics: \texttt{linear\_equations}, \texttt{quadratic\_equations}, \texttt{polynomial\_algebra}, \texttt{rational\_expressions}, \texttt{exponents\_logarithms}, \texttt{inequalities}, \texttt{sequences\_series}, \texttt{functions}, \texttt{word\_problems}, \texttt{coordinate\_geometry}, \texttt{abstract\_algebra}, \texttt{complex\_numbers}.

\paragraph{Graph Metrics (10 features).}
NetworkX-computed properties: \texttt{nx\_in\_degree}, \texttt{nx\_out\_degree}, \texttt{nx\_pagerank}, \texttt{nx\_betweenness}, \texttt{nx\_closeness}, \texttt{nx\_clustering}, \texttt{nx\_is\_source}, \texttt{nx\_is\_sink}, \texttt{nx\_reachability}, \texttt{nx\_depth\_from\_sources}.

\paragraph{FELM Feature Selection.}
For FELM, we exclude MATH-specific domain indicators and use two feature configurations based on $\alpha$:

\textbf{7 features} (for $\alpha \leq 0.08$): \texttt{frequency-score}, \texttt{coherent\_to\_ancestors}, \texttt{inference\_gap\_size}, \texttt{has\_missing\_dependencies}, \texttt{nx\_pagerank}, \texttt{nx\_reachability}, \texttt{nx\_out\_degree}.

\textbf{20 features} (for $\alpha \geq 0.09$): The 7-feature set plus \texttt{claim\_index}, \texttt{has\_unnecessary\_dependencies}, \texttt{problem\_relevance}, \texttt{uses\_problem\_data}, \texttt{domain\_science}, \texttt{domain\_reasoning}, \texttt{nx\_in\_degree}, \texttt{nx\_betweenness}, \texttt{nx\_closeness}, \texttt{nx\_clustering}, \texttt{nx\_is\_source}, \texttt{nx\_is\_sink}, \texttt{nx\_depth\_from\_sources}.

\subsection{Training Configuration}\label{sec:training-config}

Both MATH and FELM experiments share the following training setup:
\begin{itemize}[leftmargin=*,nosep]
    \item \textbf{Model}: LogisticClaimScorer---30 input features for MATH, 7 or 20 for FELM depending on $\alpha$ (see Section~\ref{sec:feature_details})
    \item \textbf{Optimizer}: Adam with learning rate 0.015, no weight-decay regularization
    \item \textbf{Training}: Up to 100 epochs with early stopping (patience\,=\,10)
    \item \textbf{Evaluation}: 20-fold cross-validation with 70/15/15 train/validation/test splits
    \item \textbf{Hyperparameter selection}: Grid search over all $\alpha$ values
\end{itemize}

\paragraph{Validation Hyperparameters.}
The surrogate-fidelity experiments in Section~\ref{sec:validating-surrogate} use fixed hyperparameters: $T_p = 0.01$, $\beta = 1.0$, $\gamma = 1.0$, $\lambda = 1.0$, $\tau_s = 0.001$, $\tau_z = 0.001$, $m = 20.0$.

\subsection{Optimized Hyperparameters}\label{sec:hyperparam-config}

Tables~\ref{tab:hyperparams_math} and~\ref{tab:hyperparams_felm} list the per-$\alpha$ hyperparameters selected by grid search for DCF.

\begin{table}[htbp]
\centering
\caption{DCF hyperparameters for MATH dataset selected via grid search.}
\label{tab:hyperparams_math}
\begin{tabular}{ccccccc}
\toprule
$\alpha$ & $\gamma$ & $\lambda$ & $\tau_s$ & $T_p$ & $\tau_z$ & lr \\
\midrule
0.03 & 4.0 & 1.35 & 1.0 & 1.0 & 0.50 & 0.015 \\
0.04 & 5.0 & 1.30 & 2.0 & 0.1 & 0.10 & 0.015 \\
0.05 & 6.0 & 1.60 & 2.0 & 0.1 & 0.10 & 0.015 \\
0.06 & 6.0 & 1.60 & 0.4 & 0.1 & 0.10 & 0.015 \\
0.07 & 1.5 & 1.70 & 0.4 & 0.1 & 0.10 & 0.015 \\
0.08 & 6.0 & 1.75 & 1.0 & 0.1 & 0.10 & 0.015 \\
0.09 & 2.0 & 1.70 & 0.4 & 0.1 & 0.01 & 0.015 \\
0.10 & 5.0 & 1.90 & 2.0 & 0.1 & 0.50 & 0.015 \\
\bottomrule
\end{tabular}
\end{table}

\begin{table}[htbp]
\centering
\caption{DCF hyperparameters for FELM dataset selected via grid search. All configurations share $\beta = 1$, $\tau_s = 0.8$, $\tau_z = 0.01$, and lr $= 0.015$.}
\label{tab:hyperparams_felm}
\begin{tabular}{cccc}
\toprule
$\alpha$ & $\gamma$ & $\lambda$ & $T_p$ \\
\midrule
0.01 & 1.50 & 1.20 & 1.0 \\
0.02 & 1.00 & 1.50 & 1.0 \\
0.03 & 1.25 & 1.30 & 1.0 \\
0.04 & 1.50 & 1.45 & 0.5 \\
0.05 & 1.25 & 1.45 & 0.5 \\
0.06 & 1.00 & 1.00 & 1.0 \\
0.07 & 1.25 & 0.90 & 1.0 \\
0.08 & 1.25 & 1.50 & 1.0 \\
0.09 & 1.00 & 1.65 & 1.0 \\
0.10 & 1.25 & 1.90 & 0.5 \\
\bottomrule
\end{tabular}
\end{table}

\subsection{Baseline $\beta_{\text{mix}}$ Optimization}\label{sec:beta-mix-opt}

For fair comparison, we optimized $\beta_{\text{mix}}$ for the frequency-based CF baseline via grid search over $\{0.0, 0.1, \ldots, 1.0\}$. Table~\ref{tab:beta_mix_combined} reports the selected values for each dataset. We use these optimized values in all comparisons to ensure DCF is evaluated against the strongest possible baseline.

\begin{table}[htbp]
\centering
\caption{Optimized $\beta_{\text{mix}}$ for the CF baseline on each dataset.}
\label{tab:beta_mix_combined}
\begin{tabular}{ccc}
\toprule
$\alpha$ & MATH & FELM \\
\midrule
0.01 & 0.0 & 0.8 \\
0.02 & 0.0 & 0.6 \\
0.03 & 0.8 & 0.5 \\
0.04 & 0.0 & 0.2 \\
0.05 & 0.0 & 0.4 \\
0.06 & 0.0 & 0.3 \\
0.07 & 0.9 & 0.4 \\
0.08 & 0.8 & 0.1 \\
0.09 & 0.2 & 0.1 \\
0.10 & 0.8 & 0.0 \\
\bottomrule
\end{tabular}
\end{table}

\subsection{Complete Results Tables}\label{sec:results-tables}

\begin{table}[htbp]
\centering
\caption{MATH dataset results: DCF vs.\ CF baseline across $\alpha$ values. Coverage target is $1{-}\alpha$. Bold coverage indicates DCF meets the target. Retention is the mean number of claims retained per problem; $\Delta$\,Ret shows percentage improvement of DCF over CF.}
\label{tab:math_results}
\small
\begin{tabular}{c|cc|ccc|c}
\toprule
$\alpha$ & \multicolumn{2}{c|}{Coverage (\%)} & \multicolumn{3}{c|}{Retention} & Meets \\
         & DCF & CF & DCF & CF & $\Delta$\,(\%) & Target? \\
\midrule
0.03 & 96.55 & 97.09 & 1.76 & 0.73 & +141.1 & No$^*$ \\
0.04 & 95.59 & 95.80 & 2.22 & 1.14 & +94.7 & No$^*$ \\
0.05 & \textbf{95.55} & 94.85 & 2.31 & 1.44 & +60.4 & Yes \\
0.06 & \textbf{94.14} & 94.10 & 3.56 & 1.74 & +104.6 & Yes \\
0.07 & \textbf{93.14} & 93.00 & 3.63 & 2.01 & +80.6 & Yes \\
0.08 & \textbf{94.14} & 91.82 & 3.69 & 2.31 & +59.7 & Yes \\
0.09 & 90.64 & 90.59 & 5.57 & 2.75 & +102.5 & No$^*$ \\
0.10 & 89.64 & 89.68 & 5.99 & 3.33 & +79.9 & No$^*$ \\
\bottomrule
\multicolumn{7}{l}{\scriptsize $^*$Near-miss: all misses are within 0.5pp of target.}
\end{tabular}
\end{table}

\begin{table}[htbp]
\centering
\caption{FELM dataset results: DCF vs.\ CF baseline across $\alpha$ values. Coverage target is $1{-}\alpha$. Bold coverage indicates DCF meets the target. Retention is the percentage of claims retained; $\Delta$\,Ret shows relative percentage improvement of DCF over CF.}
\label{tab:felm_results}
\small
\begin{tabular}{c|cc|ccc|c}
\toprule
$\alpha$ & \multicolumn{2}{c|}{Coverage (\%)} & \multicolumn{3}{c|}{Retention (\%)} & Meets \\
         & DCF & CF & DCF & CF & $\Delta$\,(\%) & Target? \\
\midrule
0.01 & \textbf{99.15} & 99.19 & 17.9 & 11.1 & +61.3 & Yes \\
0.02 & 97.75 & 98.08 & 35.7 & 22.4 & +59.4 & No$^*$ \\
0.03 & \textbf{97.32} & 97.11 & 36.4 & 29.5 & +23.4 & Yes \\
0.04 & \textbf{96.47} & 95.89 & 42.0 & 36.2 & +16.0 & Yes \\
0.05 & \textbf{95.35} & 95.10 & 46.3 & 41.5 & +11.6 & Yes \\
0.06 & \textbf{94.92} & 93.94 & 48.1 & 49.2 & $-$2.2 & Yes$^\dagger$ \\
0.07 & \textbf{94.21} & 93.13 & 49.6 & 55.0 & $-$9.8 & Yes$^\dagger$ \\
0.08 & \textbf{93.64} & 92.15 & 52.7 & 61.3 & $-$14.0 & Yes$^\dagger$ \\
0.09 & \textbf{91.39} & 90.80 & 69.0 & 66.4 & +3.9 & Yes \\
0.10 & \textbf{90.54} & 90.08 & 71.2 & 68.9 & +3.3 & Yes \\
\bottomrule
\multicolumn{7}{l}{\scriptsize $^*$DCF misses coverage target at $\alpha=0.02$ (97.75\% $<$ 98.0\%).} \\
\multicolumn{7}{l}{\scriptsize $^\dagger$DCF meets coverage but CF achieves higher retention.}
\end{tabular}
\end{table}

\section{Ablation Study Details}
\label{sec:ablation_details}

\subsection{Single-Feature Baseline Configuration}\label{sec:ablation-config}

For each single-feature baseline, we use CF with $\beta_{\text{mix}}$ optimized per-$\alpha$ via grid search over $\{0.0, 0.1, \ldots, 1.0\}$. Table~\ref{tab:beta_mix_single} shows the selected values.

\begin{table}[htbp]
\centering
\caption{Optimized $\beta_{\text{mix}}$ values for MATH single-feature baselines.}
\label{tab:beta_mix_single}
\begin{tabular}{cccc}
\toprule
$\alpha$ & NX Reachability & Claim Index & Frequency Score \\
\midrule
0.03 & 0.7 & 0.9 & 0.0 \\
0.04 & 0.9 & 1.0 & 0.0 \\
0.05 & 1.0 & 0.7 & 1.0 \\
0.06 & 0.3 & 1.0 & 0.4 \\
0.07 & 1.0 & 1.0 & 0.3 \\
0.08 & 0.2 & 0.9 & 0.1 \\
0.09 & 0.5 & 1.0 & 1.0 \\
0.10 & 0.6 & 1.0 & 0.9 \\
\bottomrule
\end{tabular}
\end{table}

\begin{table}[htbp]
\centering
\caption{Optimized $\beta_{\text{mix}}$ values for FELM single-feature baselines.}
\label{tab:beta_mix_single_felm}
\begin{tabular}{cccc}
\toprule
$\alpha$ & NX Reachability & Claim Index & Frequency Score \\
\midrule
0.01 & 0.0 & 0.5 & 0.8 \\
0.02 & 0.0 & 0.2 & 0.6 \\
0.03 & 0.2 & 0.4 & 0.5 \\
0.04 & 0.9 & 1.0 & 0.2 \\
0.05 & 0.7 & 0.9 & 0.4 \\
0.06 & 0.5 & 1.0 & 0.3 \\
0.07 & 0.0 & 1.0 & 0.4 \\
0.08 & 0.3 & 1.0 & 0.1 \\
0.09 & 0.0 & 0.8 & 0.1 \\
0.10 & 0.0 & 1.0 & 0.0 \\
\bottomrule
\end{tabular}
\end{table}

\subsection{Complete Numerical Results}\label{sec:ablation-results}

Tables~\ref{tab:ablation_full} and~\ref{tab:ablation_full_felm} provide complete retention and coverage results for MATH and FELM respectively.

\begin{table}[htbp]
\centering
\caption{MATH ablation: DCF vs.\ single-feature baselines with optimized $\beta_{\text{mix}}$.}
\label{tab:ablation_full}
\begin{tabular}{c|cc|cc|cc|cc}
\toprule
& \multicolumn{2}{c|}{\textbf{DCF (Ours)}} & \multicolumn{2}{c|}{\textbf{NX Reachability}} & \multicolumn{2}{c|}{\textbf{Claim Index}} & \multicolumn{2}{c}{\textbf{Frequency}} \\
$\alpha$ & Cov & Ret & Cov & Ret & Cov & Ret & Cov & Ret \\
\midrule
0.03 & 0.97 & \textbf{1.76} & 0.98 & 1.60 & 0.98 & 0.93 & 0.98 & 0.47 \\
0.04 & 0.96 & \textbf{2.22} & 0.97 & 2.04 & 0.97 & 1.19 & 0.97 & 0.84 \\
0.05 & 0.96 & \textbf{2.31} & 0.96 & 2.16 & 0.96 & 1.41 & 0.97 & 1.04 \\
0.06 & 0.94 & \textbf{3.56} & 0.94 & 2.87 & 0.95 & 1.86 & 0.95 & 1.49 \\
0.07 & 0.93 & \textbf{3.63} & 0.94 & 2.69 & 0.93 & 2.54 & 0.94 & 1.81 \\
0.08 & 0.94 & \textbf{3.69} & 0.93 & 3.26 & 0.93 & 2.67 & 0.93 & 1.94 \\
0.09 & 0.91 & \textbf{5.57} & 0.91 & 3.72 & 0.93 & 3.37 & 0.93 & 2.20 \\
0.10 & 0.90 & \textbf{5.99} & 0.91 & 4.18 & 0.91 & 3.62 & 0.92 & 2.21 \\
\bottomrule
\end{tabular}
\end{table}

\begin{table}[htbp]
\centering
\caption{FELM ablation: DCF vs.\ single-feature baselines with optimized $\beta_{\text{mix}}$. Retention is mean claims per problem.}
\label{tab:ablation_full_felm}
\begin{tabular}{c|cc|cc|cc|cc}
\toprule
& \multicolumn{2}{c|}{\textbf{DCF (Ours)}} & \multicolumn{2}{c|}{\textbf{NX Reachability}} & \multicolumn{2}{c|}{\textbf{Claim Index}} & \multicolumn{2}{c}{\textbf{Frequency}} \\
$\alpha$ & Cov & Ret & Cov & Ret & Cov & Ret & Cov & Ret \\
\midrule
0.01 & 0.99 & \textbf{0.72} & 0.99 & 0.17 & 0.99 & 0.24 & 0.99 & 0.51 \\
0.02 & 0.98 & \textbf{1.43} & 0.98 & 0.31 & 0.98 & 0.33 & 0.98 & 0.87 \\
0.03 & 0.97 & \textbf{1.46} & 0.97 & 0.50 & 0.97 & 0.44 & 0.97 & 1.27 \\
0.04 & 0.96 & \textbf{1.68} & 0.96 & 0.58 & 0.96 & 0.61 & 0.96 & 1.55 \\
0.05 & 0.95 & \textbf{1.85} & 0.95 & 0.73 & 0.95 & 0.72 & 0.95 & 1.71 \\
0.06 & 0.95 & 1.92 & 0.94 & 0.84 & 0.94 & 0.86 & 0.94 & \textbf{2.04} \\
0.07 & 0.94 & 1.98 & 0.93 & 0.96 & 0.93 & 1.05 & 0.93 & \textbf{2.27} \\
0.08 & 0.94 & 2.11 & 0.92 & 1.05 & 0.92 & 1.26 & 0.92 & \textbf{2.56} \\
0.09 & 0.91 & \textbf{2.76} & 0.91 & 1.22 & 0.91 & 1.31 & 0.91 & 2.72 \\
0.10 & 0.91 & \textbf{2.85} & 0.90 & 1.29 & 0.90 & 1.50 & 0.90 & 2.80 \\
\bottomrule
\end{tabular}
\end{table}

\paragraph{Key Observations (MATH).}
\begin{itemize}[leftmargin=*,nosep]
    \item DCF outperforms all single-feature baselines at every $\alpha$ value
    \item Improvement over NX Reachability (best single feature): 7--50\%
    \item Largest gains at strict coverage: +50\% at $\alpha=0.09$
    \item All methods maintain target coverage within acceptable margins
\end{itemize}

\paragraph{Key Observations (FELM).}
\begin{itemize}[leftmargin=*,nosep]
    \item DCF outperforms all single-feature baselines at low $\alpha$ (0.01--0.05) and high $\alpha$ (0.09--0.10)
    \item Frequency Score baseline wins at mid-range $\alpha$ (0.06--0.08), reflecting FELM's simpler graph structure
    \item DCF improvement over best baseline: up to +41\% at $\alpha=0.01$
    \item Frequency Score is a much stronger single-feature baseline on FELM than on MATH
\end{itemize}

\section{Case Study Details}
\label{sec:case_study_details}

\subsection{Methodology}\label{sec:case_study_methodology}

The case studies in Section~\ref{sec:case_studies} are generated using the following methodology:

\paragraph{Example Selection.} We select two representative examples from the MATH dataset at $\alpha = 0.06$:
\begin{itemize}[leftmargin=*,nosep]
    \item \textbf{Type 1 (Example 186, Claim 8)}: A TRUE claim with frequency-score = 0 that the learned model correctly retains
    \item \textbf{Type 2 (Example 190, Claim 2)}: A FALSE claim with frequency-score = 5 that the learned model correctly rejects
\end{itemize}

\paragraph{Evaluation Protocol.} We use 20-fold cross-validation matching the main experimental setup:
\begin{enumerate}[leftmargin=*,nosep]
    \item For each fold, train the learned logistic claim scorer on the training set
    \item Compute calibration quantiles for both learned and baseline methods
    \item Generate prediction sets for the test examples using both methods
    \item Record which claims are retained in each fold's prediction set
\end{enumerate}

\paragraph{Baseline Configuration.} The frequency-based CF baseline uses $\beta_{\text{mix}}$ optimized per-$\alpha$ via grid search over $\{0.0, 0.1, \ldots, 1.0\}$. At $\alpha = 0.06$, the optimal value is $\beta_{\text{mix}} = 0.4$ (see Table~\ref{tab:beta_mix_single}).

\paragraph{Majority Voting.} For visualization, we use majority voting across folds: a claim is considered ``retained'' if it appears in the prediction set in more than 50\% of folds ($>10$ out of 20 folds). This provides stable results despite fold-to-fold variation.

\paragraph{Feature Contributions.} Model weights are averaged across the 20 folds to compute stable feature contributions. The contribution of each feature is computed as $\text{weight} \times \text{value}$, and the total learned score is the sum of all contributions plus bias.

\subsection{Feature Contribution Tables}

Tables~\ref{tab:case_study_retain} and~\ref{tab:case_study_reject} show feature contributions for the case studies. Only features with non-zero values are shown; the remaining features (primarily domain indicators not applicable to the specific problem) have value 0 and contribute nothing to the score.

\begin{table}[htbp]
\centering
\caption{Case Study 1 (Example 186, Claim 8): Feature contributions for valid claim with zero frequency.}
\label{tab:case_study_retain}
\begin{tabular}{lrrr}
\toprule
Feature & Value & Weight & Contribution \\
\midrule
\texttt{nx\_reachability} & 3.00 & 0.272 & +0.815 \\
\texttt{claim\_index} & 8.00 & 0.098 & +0.780 \\
\texttt{nx\_in\_degree} & 2.00 & 0.135 & +0.269 \\
\texttt{quadratic\_equations} & 1.00 & 0.229 & +0.229 \\
\texttt{nx\_out\_degree} & 1.00 & 0.176 & +0.176 \\
\texttt{problem\_relevance} & 1.00 & 0.144 & +0.144 \\
\texttt{coherent\_to\_ancestors} & 1.00 & 0.110 & +0.110 \\
\texttt{nx\_betweenness} & 0.22 & 0.292 & +0.064 \\
\texttt{uses\_problem\_data} & 0.50 & 0.104 & +0.052 \\
\texttt{frequency-score} & 0.00 & 0.021 & +0.000 \\
\midrule
\multicolumn{3}{l}{\textbf{Total Learned Score}} & \textbf{2.66} \\
\multicolumn{3}{l}{\textbf{Baseline Score (freq-score)}} & 0.00 \\
\multicolumn{3}{l}{\textbf{Prediction Set Retention}} & Learned: 9/12, Baseline: 0/12 \\
\bottomrule
\end{tabular}
\end{table}

\begin{table}[htbp]
\centering
\caption{Case Study 2 (Example 190, Claim 2): Feature contributions for invalid claim with high frequency.}
\label{tab:case_study_reject}
\begin{tabular}{lrrr}
\toprule
Feature & Value & Weight & Contribution \\
\midrule
\texttt{nx\_is\_source} & 1.00 & $-$0.253 & \textbf{$-$0.253} \\
\texttt{word\_problems} & 1.00 & 0.227 & +0.227 \\
\texttt{claim\_index} & 2.00 & 0.098 & +0.195 \\
\texttt{nx\_in\_degree} & 1.00 & 0.135 & +0.135 \\
\texttt{coherent\_to\_ancestors} & 1.00 & 0.110 & +0.110 \\
\texttt{frequency-score} & 5.00 & 0.021 & +0.107 \\
\texttt{nx\_pagerank} & 0.13 & $-$0.249 & $-$0.033 \\
\texttt{nx\_closeness} & 0.22 & 0.093 & +0.021 \\
\midrule
\multicolumn{3}{l}{\textbf{Total Learned Score}} & \textbf{0.54} \\
\multicolumn{3}{l}{\textbf{Baseline Score (freq-score)}} & 5.00 \\
\multicolumn{3}{l}{\textbf{Prediction Set Retention}} & Learned: 0/7, Baseline: 7/7 \\
\bottomrule
\end{tabular}
\end{table}

\subsection{Analysis}

\paragraph{Case 1: Retaining Valid Claims with Zero Frequency.}
This case study appears in the main text (Section~\ref{sec:case_studies}). Graph connectivity features (\texttt{nx\_reachability} +0.82, \texttt{nx\_in\_degree} +0.27) and claim position (\texttt{claim\_index} +0.78) compensate for zero frequency-score, enabling retention of a valid intermediate reasoning step that the baseline would reject entirely.
\paragraph{Case 2: Rejecting Invalid Claims with High Frequency.}
Claim 2 (``The integer represented by the input is greater than 0'') is a hallucination with frequency$=5.0$---it appeared consistently across regenerations despite being invalid. The frequency-based baseline retains it (and all 7 claims in this semantically incoherent output).

DCF rejects this claim by recognizing its isolated graph position. Despite high frequency-score contributing +0.11, the learned model assigns a large negative contribution to \texttt{nx\_is\_source} ($-$0.25), correctly penalizing this claim that lacks proper ancestors in the reasoning graph. The total learned score (0.54) falls below the threshold, and DCF correctly rejects all 7 claims in this output.

This demonstrates the complementary case to Section~\ref{sec:case_studies}: high frequency alone does not indicate validity when graph structure signals incoherence.

\begin{figure}[htbp]
\centering
\scalebox{0.7}{
\begin{tikzpicture}[
    node distance=0.5cm and 0.6cm,
    claim/.style={circle, draw, minimum size=0.7cm, font=\small, fill=red!20, line width=1.5pt},
    highlight/.style={circle, draw, minimum size=0.7cm, font=\small, fill=yellow!40, line width=2.5pt, draw=orange},
    arrow/.style={->, >=stealth, thick}
]

\node[text width=7cm, font=\normalsize, align=justify, draw, fill=blue!5, anchor=north west] (problem) at (-6,0) {
    \textbf{Problem:} A rectangular patio has an area of 180 square feet and a perimeter of 54 feet. What is the length of the diagonal (in feet) squared?\\[2pt]
    \textbf{Answer:} 585
};

\node[below=0.4cm of problem.south west, anchor=north west, text width=7cm, font=\small, align=left, draw, fill=white] (claims) {
    \textbf{Claims (all FALSE):}\\
    0: Input consists of digits `3', `6', `9'\\
    1: Input represents integer 369\\
    2: Integer from input is $> 0$ \textcolor{orange}{$\leftarrow$ freq=5.0}\\
    3: Integer 369 is odd\\
    4: Integer 369 is divisible by 3\\
    5: Integer 369 is divisible by 9\\
    6: $369 = 9 \times 41$\\[3pt]
    \textit{``369'' never appears in problem}
};

\node[below=0.4cm of claims.south west, anchor=north west, text width=7cm, font=\normalsize, align=justify, draw, fill=red!5] (summary) {
    \textbf{Performance:} \textcolor{green!70!black}{Learned: 6/7 rejected}\\
    \textcolor{red!70!black}{Baseline: 0/7 rejected}\\
    \textcolor{orange}{Highlighted: Claim 2 (freq=5.0)}
};

\node[claim] (c0) at (3.6, -1) {0};
\node[claim, below=0.6cm of c0] (c1) {1};
\node[highlight, below=0.6cm of c1] (c2) {2};
\node[claim, below left=0.6cm and 0.8cm of c1] (c3) {3};
\node[claim, below=0.6cm of c3] (c4) {4};
\node[claim, below right=0.6cm and 0.8cm of c1] (c6) {6};
\node[claim, below=0.6cm of c6] (c5) {5};

\draw[arrow] (c0) -- (c1);
\draw[arrow] (c1) -- (c2);
\draw[arrow] (c1) -- (c3);
\draw[arrow] (c1) -- (c4);
\draw[arrow] (c1) -- (c6);
\draw[arrow] (c6) -- (c5);

\node[below=1.2cm of c4, font=\scriptsize, align=center] (legend) {
    \textcolor{red!70!black}{\textbf{$\bullet$}} All FALSE\\
    \textcolor{orange}{\textbf{$\bullet$}} Claim 2
};

\end{tikzpicture}
}
\caption{Freq. vs. Learned ADG Comparison (Incorrect Claims)}
\label{fig:case_study_reject}
\end{figure}
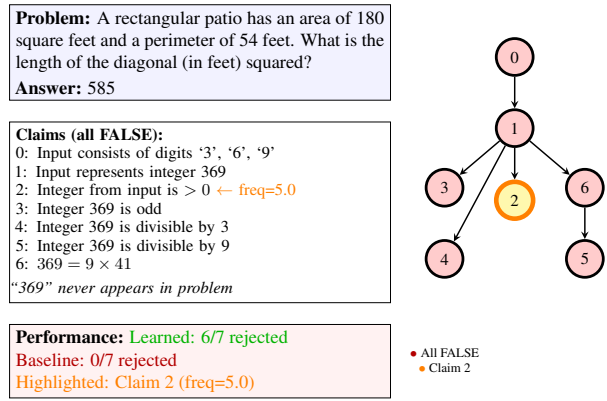

\section{SHAP Feature Importance Analysis}
\label{sec:shap_analysis}

\subsection{Methodology}\label{sec:shap-methodology}

To ensure robust feature importance estimates, we aggregate SHAP values across multiple coverage levels and training splits. We analyze models trained at $\alpha \in \{0.03, 0.04, 0.05, 0.06, 0.07, 0.08, 0.09, 0.10\}$. For each $\alpha$, we train 5 independent models using different train/validation splits (80/20), yielding 40 total models (8 $\alpha$ values $\times$ 5 splits).

\paragraph{SHAP Computation.}
\begin{itemize}[leftmargin=*,nosep]
    \item Method: Kernel SHAP (model-agnostic explainer)
    \item Background data: 100 samples randomly selected from all claims
    \item Evaluation samples: Maximum 500 claims per model
    \item For LogisticClaimScorer, we extract linear weights $w$ and bias $b$ to create prediction function $f(x) = xw + b$
\end{itemize}

\paragraph{Aggregation Procedure.}
\begin{enumerate}[leftmargin=*,nosep]
    \item Collect SHAP values from all 40 models (each contributing up to 500 claim samples)
    \item Pool all SHAP values per feature across models ($\sim$20,000 values per feature)
    \item Compute mean absolute SHAP value: $\text{importance}_i = \text{mean}(|\text{SHAP}_i|)$
    \item Compute standard deviation to quantify variance across models
    \item Rank features by mean importance
\end{enumerate}

\subsection{Results}\label{sec:shap-results}

Table~\ref{tab:shap_results} shows the top 10 features by mean $|\text{SHAP}|$ value.

\begin{table}[htbp]
\centering
\caption{Top 10 features by SHAP importance aggregated across 40 models.}
\label{tab:shap_results}
\begin{tabular}{clcc}
\toprule
Rank & Feature & Mean $|\text{SHAP}|$ & Std \\
\midrule
1 & \texttt{nx\_reachability} & 0.602 & 0.428 \\
2 & \texttt{claim\_index} & 0.445 & 0.303 \\
3 & \texttt{frequency-score} & 0.180 & 0.126 \\
4 & \texttt{inference\_gap\_size} & 0.124 & 0.248 \\
5 & \texttt{nx\_out\_degree} & 0.060 & 0.081 \\
6 & \texttt{nx\_in\_degree} & 0.038 & 0.056 \\
7 & \texttt{nx\_is\_source} & 0.034 & 0.035 \\
8 & \texttt{problem\_relevance} & 0.034 & 0.032 \\
9 & \texttt{uses\_problem\_data} & 0.034 & 0.041 \\
10 & \texttt{quadratic\_equations} & 0.022 & 0.020 \\
\bottomrule
\end{tabular}
\end{table}

\paragraph{Key Findings.}
\begin{itemize}[leftmargin=*,nosep]
    \item \textbf{Graph structure dominates}: \texttt{nx\_reachability} (0.602) contributes 3.3$\times$ more than \texttt{frequency-score} (0.180)
    \item \textbf{Position matters}: \texttt{claim\_index} (0.445) contributes 2.5$\times$ more than frequency
    \item \textbf{Coherence features rank highly}: \texttt{inference\_gap\_size} (0.124) captures missing reasoning steps
    \item \textbf{Mathematical content has lower impact}: Domain indicators individually contribute less, but collectively capture topic-specific patterns
\end{itemize}

Figure~\ref{fig:shap_beeswarm_full} shows the complete SHAP beeswarm plot with all 30 features.

\begin{figure}[htbp]
\centering
\includegraphics[width=0.7\textwidth]{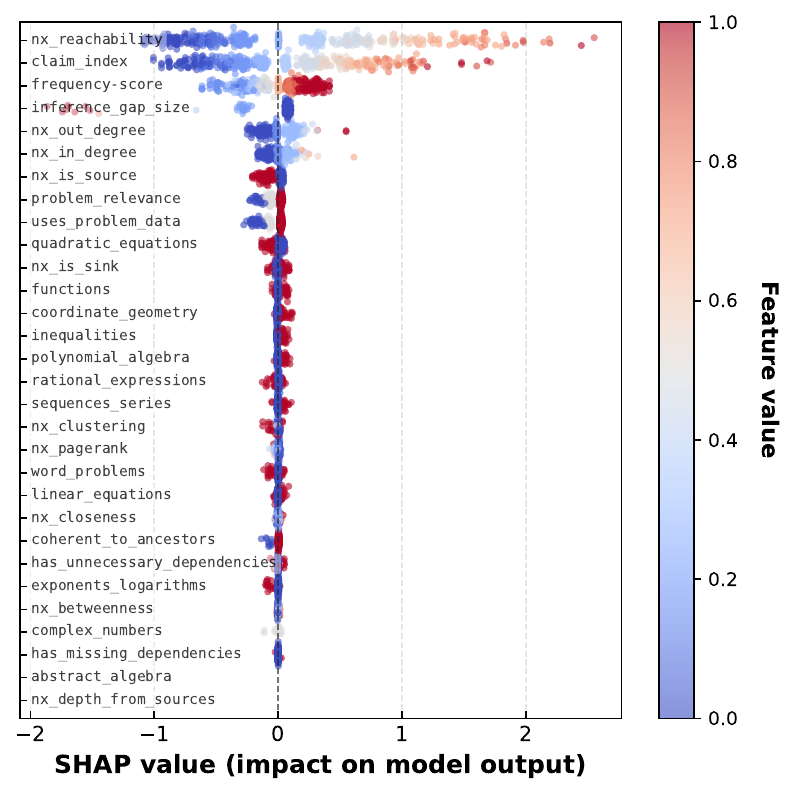}
\caption{MATH SHAP beeswarm plot across 40 models. Each dot represents one claim; color indicates normalized feature value (blue=low, red=high). Graph structure features dominate over frequency-based scoring.}
\label{fig:shap_beeswarm_full}
\end{figure}

\subsection{FELM SHAP Analysis}\label{sec:felm-shap}

For FELM, we perform separate SHAP analyses for the two feature configurations used at different $\alpha$ levels: 7 features for $\alpha \in [0.01, 0.08]$ and 20 features for $\alpha \in [0.09, 0.10]$.

\paragraph{7-Feature Models ($\alpha \leq 0.08$).}
The 7-feature configuration uses: \texttt{frequency-score}, \texttt{coherent\_to\_ancestors}, \texttt{inference\_gap\_size}, \texttt{has\_missing\_dependencies}, \texttt{nx\_pagerank}, \texttt{nx\_reachability}, and \texttt{nx\_out\_degree}. Figure~\ref{fig:shap_felm_7} shows that \texttt{frequency-score} dominates (0.995), with \texttt{nx\_reachability} (0.154) and \texttt{inference\_gap\_size} (0.093) as secondary contributors.

\paragraph{20-Feature Models ($\alpha \geq 0.09$).}
At higher $\alpha$ values, we use the full 20-feature set including additional graph metrics and coherence features. Figure~\ref{fig:shap_felm_20} shows that \texttt{frequency-score} remains dominant (0.941), but \texttt{claim\_index} (0.209) and \texttt{nx\_reachability} (0.149) gain importance with the expanded feature set.

\begin{figure}[htbp]
\centering
\begin{subfigure}[b]{0.48\textwidth}
    \centering
    \includegraphics[width=\textwidth]{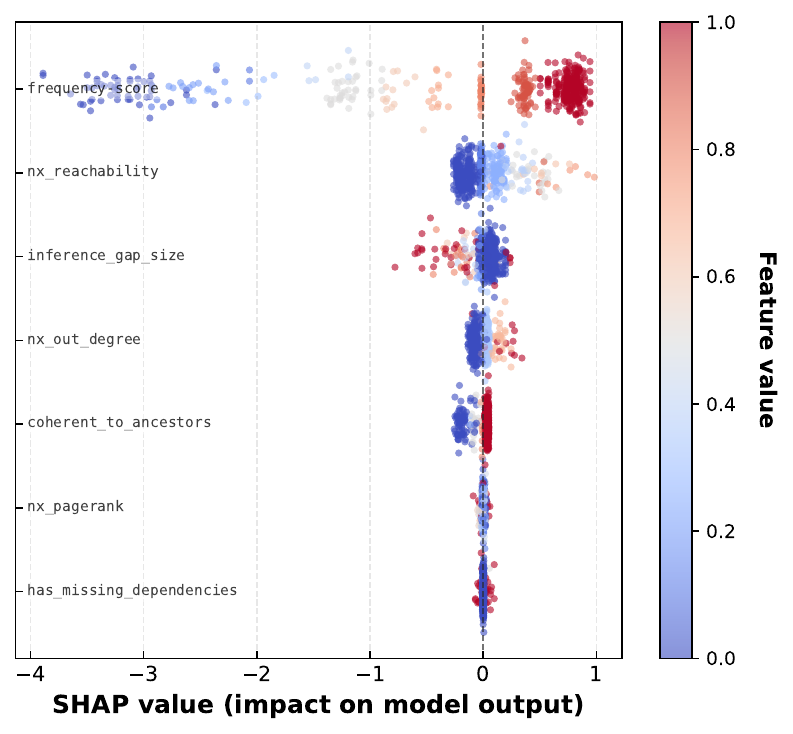}
    \caption{7-feature models ($\alpha \leq 0.08$).}
    \label{fig:shap_felm_7}
\end{subfigure}
\hfill
\begin{subfigure}[b]{0.48\textwidth}
    \centering
    \includegraphics[width=\textwidth]{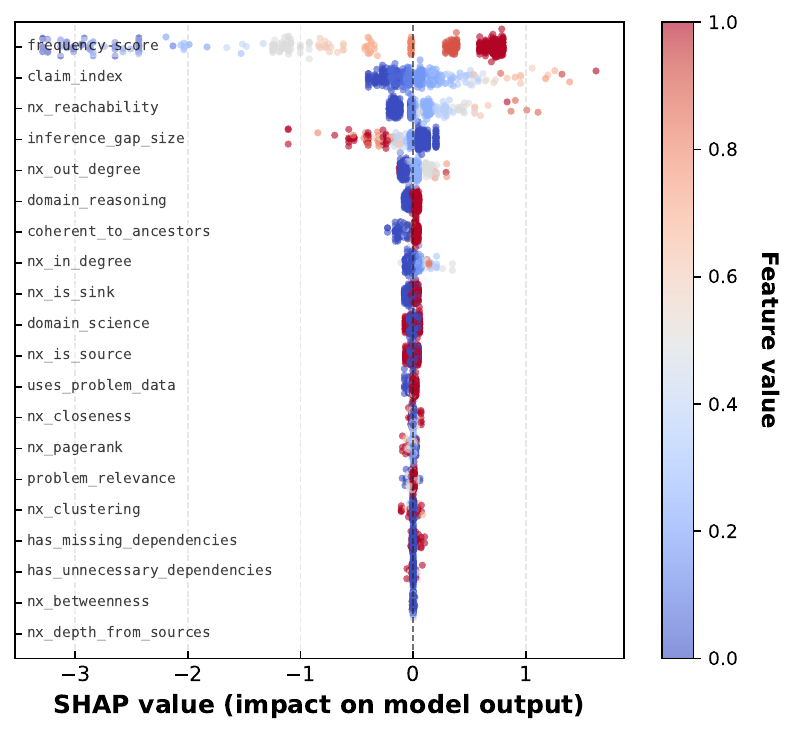}
    \caption{20-feature models ($\alpha \geq 0.09$).}
    \label{fig:shap_felm_20}
\end{subfigure}
\caption{FELM SHAP beeswarm plots. Left: 7-feature configuration used at stricter coverage levels. Right: 20-feature configuration used at relaxed coverage levels. \texttt{frequency-score} dominates in both, reflecting FELM's simpler reasoning chains where self-consistency is more discriminative.}
\label{fig:shap_felm}
\end{figure}

\paragraph{MATH vs.\ FELM Comparison.}
The key difference between datasets is the relative importance of \texttt{frequency-score}: it ranks 3rd on MATH (0.180) but 1st on FELM (0.941--0.995). This reflects FELM's simpler reasoning chains (4.0 vs.\ 7.3 claims per problem), where self-consistency frequency is more discriminative. On MATH, graph structure features (\texttt{nx\_reachability}, \texttt{claim\_index}) provide stronger signal for navigating complex dependency graphs.

\section{Score Distribution Analysis}
\label{sec:score_analysis}

To understand how DCF differentiates true from false claims, we analyze score distributions using pooled z-score normalization.

\subsection{Methodology}\label{sec:score-methodology}

For each $\alpha$ value, we:
\begin{enumerate}[leftmargin=*,nosep]
    \item Collect scores from all 20 CV folds for both learned and baseline models
    \item Apply z-score normalization pooled across both methods for comparability
    \item Compute separation as: $\text{sep} = \mu_{\text{true}} - \mu_{\text{false}}$
    \item Compute Cohen's $d$ effect size and distribution overlap
\end{enumerate}

\subsection{Results}\label{sec:score-results}

Figure~\ref{fig:score_distributions_full} shows the 2$\times$2 comparison at $\alpha=0.05$.

\begin{figure}[htbp]
\centering
\begin{subfigure}[b]{0.45\textwidth}
    \includegraphics[width=\textwidth]{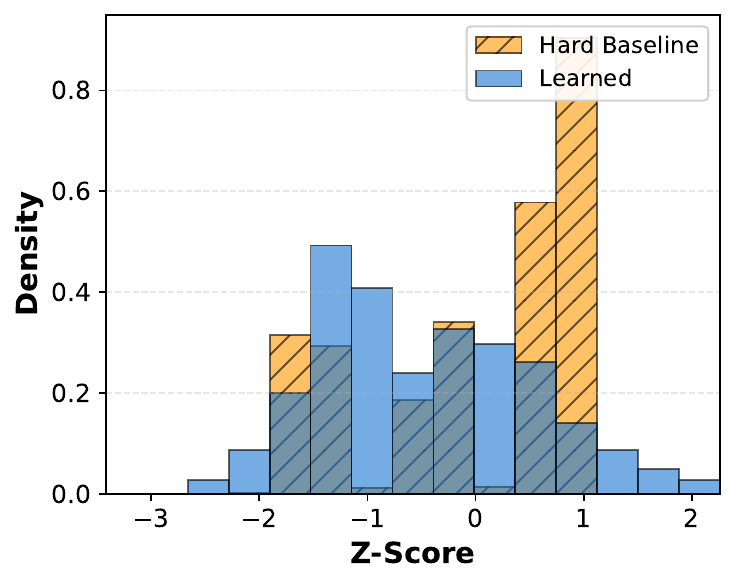}
    \caption{True claims: learned vs baseline}
\end{subfigure}
\hfill
\begin{subfigure}[b]{0.45\textwidth}
    \includegraphics[width=\textwidth]{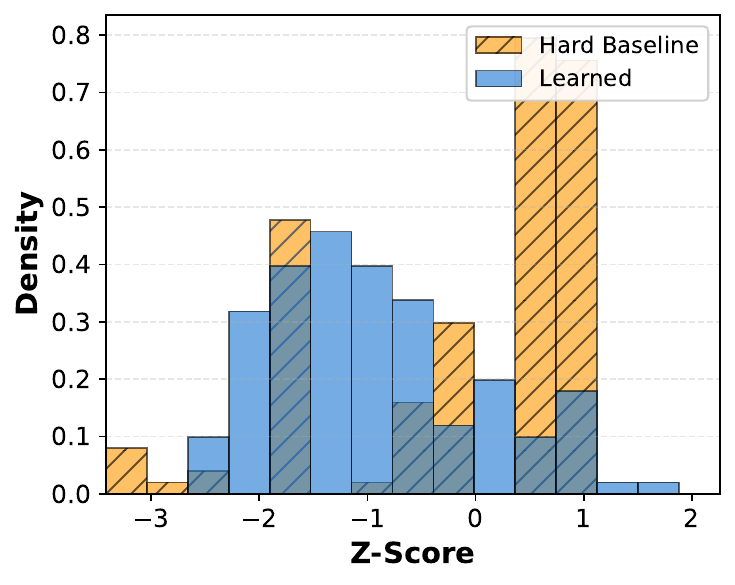}
    \caption{False claims: learned vs baseline}
\end{subfigure}

\vspace{0.3cm}

\begin{subfigure}[b]{0.45\textwidth}
    \includegraphics[width=\textwidth]{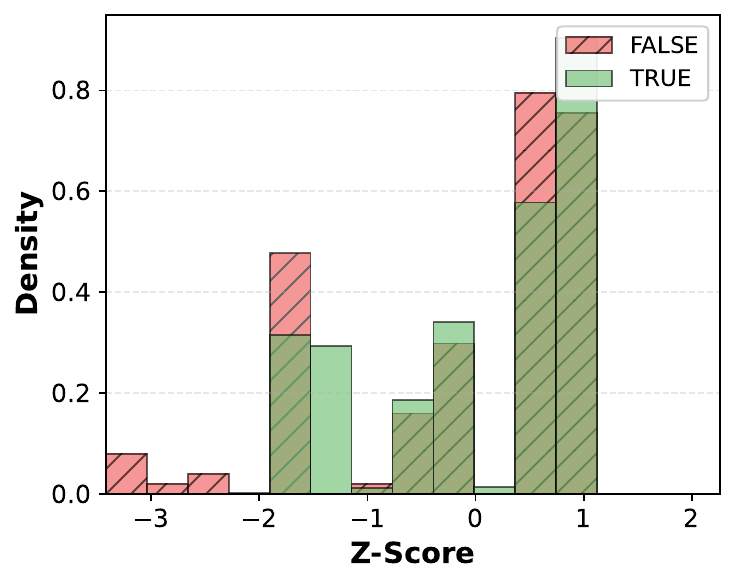}
    \caption{Baseline: true vs false separation}
\end{subfigure}
\hfill
\begin{subfigure}[b]{0.45\textwidth}
    \includegraphics[width=\textwidth]{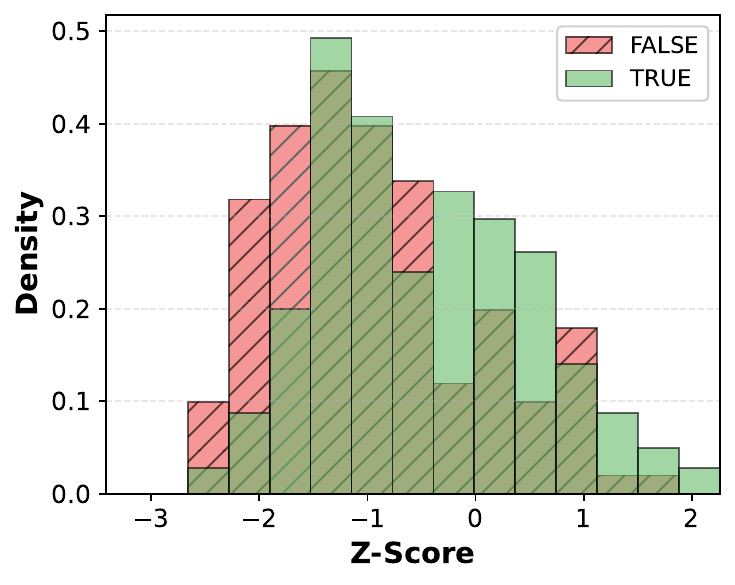}
    \caption{Learned: true vs false separation}
\end{subfigure}
\caption{Score distributions at $\alpha = 0.05$. Top row: method comparison for true/false claims. Bottom row: within-method discrimination. The learned model achieves separation 0.450 vs.\ baseline's 0.141 (3.2$\times$).}
\label{fig:score_distributions_full}
\end{figure}

Table~\ref{tab:separation_full} shows separation metrics across all $\alpha$ values.

\begin{table}[htbp]
\centering
\caption{Score separation metrics across $\alpha$ values. DCF optimizes retention under coverage constraints, not separation directly, explaining the variability.}
\label{tab:separation_full}
\begin{tabular}{ccccc}
\toprule
$\alpha$ & Separation & Ratio vs.\ Baseline & Cohen's $d$ & Overlap \\
\midrule
0.03 & 0.170 & 1.21$\times$ & 0.298 & 0.926 \\
0.04 & 0.040 & 0.28$\times$ & 0.046 & 0.962 \\
0.05 & 0.450 & 3.18$\times$ & 0.468 & 0.939 \\
0.06 & 0.056 & 0.40$\times$ & 0.057 & 0.954 \\
0.07 & 0.535 & 3.80$\times$ & 0.514 & 0.882 \\
0.08 & 0.434 & 3.05$\times$ & 0.574 & 0.927 \\
0.09 & $-$0.184 & -- & $-$0.192 & 0.933 \\
0.10 & 0.111 & 0.79$\times$ & 0.124 & 0.945 \\
\bottomrule
\end{tabular}
\end{table}

\paragraph{Interpretation.} Separation varies substantially---and can even be negative---because DCF optimizes for coverage-constrained retention, not distributional separation. At $\alpha=0.09$, DCF achieves +103\% higher retention despite negative separation, demonstrating that conformal prediction succeeds through calibrated threshold placement rather than global score discrimination. The quantile-based approach requires only that the threshold correctly partitions claims at the decision boundary, not that true and false claims be globally separated.

\end{document}